\newtheorem{assumption}{Assumption}
\begin{document}

\title{Decoupled Greedy Learning of CNNs for Synchronous and Asynchronous Distributed Learning}

\author{\name Eugene Belilovsky \email eugene.belilovsky@concordia.ca \\
       \addr Concordia University and Mila\\
       Montreal, Canada\\
       \AND
        \name Louis Leconte \email louis.leconte@ens-paris-saclay.fr \\
       \addr LIP6, Sorbonne University\\
       CMAP, Ecole Polytechnique, France\\
       \AND
       \name Lucas Caccia \email lucas.page-caccia@mail.mcgill.ca \\
       \addr 
        McGill University and Mila\\
        Montreal, Canada\\
       \AND
       \name Michael Eickenberg \email meickenberg@flatironinstitute.org \\
       \addr CCM, Flatiron Institute\\
       \AND
       \name Edouard Oyallon \email edouard.oyallon@lip6.fr \\
       \addr CNRS, LIP6, Sorbonne University\\
       Paris, France
       }

\editor{}

\maketitle{}

\begin{keywords}
  Greedy Learning, Asynchronous Distributed Optimization, Decoupled Optimization, Compression for Optimization
\end{keywords}

\begin{abstract}
A commonly cited inefficiency of neural network training using back-propagation is the \textit{update locking} problem: each layer must wait for the signal to propagate through the full network before updating. Several alternatives that can alleviate this issue have been proposed. 
In this context, we consider a simple alternative based on minimal feedback,
which we call \emph{Decoupled Greedy Learning (DGL)}. It is based on a classic greedy relaxation of the joint training objective, recently shown to be effective in the context of Convolutional Neural Networks (CNNs) on large-scale image classification. We consider an optimization of this objective that permits us to decouple the layer training, allowing for layers or modules in networks to be trained with a potentially linear parallelization. With the use of a replay buffer we show that this approach can be extended to asynchronous settings, where modules can operate and continue to update with possibly large communication delays. To address bandwidth and memory issues we propose an approach based on online vector quantization. This allows to drastically reduce the communication bandwidth between modules and  required memory for replay buffers. 
 We show theoretically and empirically that this approach converges and compare it to the sequential solvers. 
We demonstrate the effectiveness of DGL against alternative approaches  on the CIFAR-10 dataset and on the large-scale ImageNet dataset. 
\end{abstract}

\begin{figure*}
    \centering
    \includegraphics[width=1.0\linewidth]{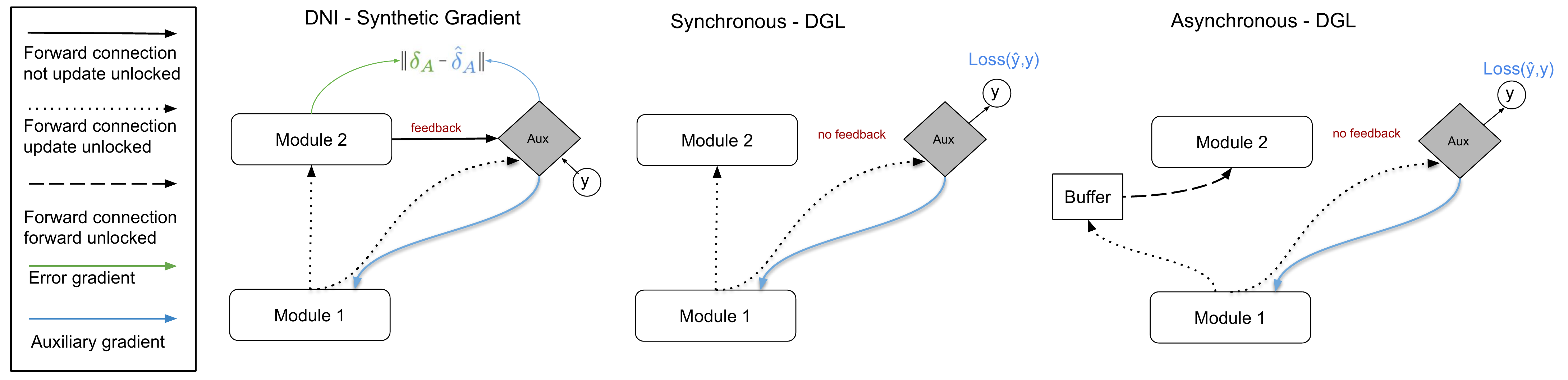}
    \caption{Comparison of DNI, Synchronous and Asynchronous DGL. Note that in DGL subsequent modules do not provide feedback to earlier modules, thus removing dependencies of the auxiliary network on backpropagated gradient information. Asynchronous DGL allows achieving forward unlocking (updates can be performed without waiting for prior modules).}
    \label{fig:dni_dgl}
\end{figure*}

\section{Introduction}
Jointly training all layers using back-propagation and stochastic optimization is the standard method for learning neural networks, including the computationally intensive Convolutional Neural Networks (CNNs). 
Due to the sequential nature of gradient processing, standard back-propagation has several well-known inefficiencies that prohibit parallelization of the computations of the different constituent modules. \citet{jaderberg2016decoupled} characterize these inefficiencies in order of severity as the 
\textit{forward-}, 
\textit{update-}, and 
\textit{backward} 
\textit{locking} problems. 
Backward 
\textit{\textbf{un}locking}
of a module
would permit updates of all modules once 
forward
signals have propagated to all subsequent modules, update 
\textit{\textbf{un}locking}
would permit updates of a module \textit{before} a signal has reached all subsequent modules, and forward 
\textit{\textbf{un}locking}
would permit a module to operate asynchronously from its predecessor and dependent modules. 

Methods 
addressing backward locking to a certain degree have been proposed in \citep{ddg,Huo2018,choromanska2018beyond,direct_feedback}. However, update locking is a far more severe inefficiency. Thus \citet{jaderberg2016decoupled} and \citet{czarnecki2017} propose and analyze Decoupled Neural Interfaces (DNI), a method that
uses an auxiliary network to predict the gradient of the backward pass directly from the input. This method
unfortunately does not scale well computationally or in terms of accuracy, especially in the case of CNNs \citep{Huo2018,ddg}. Indeed, auxiliary networks must predict a  weight gradient, usually high-dimensional for larger models and input sizes.

A major 
obstacle to
update unlocking 
is the heavy reliance on
the upper modules for feedback.
Several works have recently revisited the classic \citep{ivakhnenko,bengio2007greedy} approach of supervised greedy layerwise training of neural networks \citep{huang2017learning,Marquez}. In \citet{shallow}  it is shown that such an approach, which relaxes the joint learning objective, and does not require global feedback, can lead to high-performance deep CNNs on large-scale datasets. 
We will show that the greedy sequential learning objective used in these papers can be efficiently solved with an alternative parallel optimization algorithm, which permits decoupling the computations and 
achieves update unlocking. This then opens the door to extend to a forward unlocked model, which is a challenge not effectively addressed by any of the prior work. 
In particular, we use replay buffers \citep{lin1992self} to achieve
forward unlocking. This simple strategy can be shown to be a superior baseline for parallelizing the training across modules of a neural network. A training procedure that permits forward and update unlocked computation allows for model components to communicate infrequently and be trained in low-bandwidth settings such as across geographically distant nodes.

{The present paper is an extended version of \citep{belilovsky2020decoupled} that expands the asynchronous (forward unlocked) algorithm, addressing some of its key limitations. In particular, a major issue of replay buffers used to store intermediate representations is that they can grow large, 
requiring large amounts of
on-node memory and more critically increasing inter-node bandwidth. We introduce and evaluate a potential path for drastically reducing the bandwidth constraints between nodes in the case of  asynchronous DGL as well as reducing the need for on-node memory. 
In order to address this issue, a natural approach is compression. However, the replay buffers here store \textit{time-varying} activations. In this context and inspired by \cite{caccia2019online,oord2017neural} we thus propose a computationally efficient method for online learned compression using codebooks. Specifically, we introduce in Sec. \ref{subsec:qt} and test in Sec. \ref{sec:xp_qt}, the use of a quantization module that compresses the intermediary features used between successive layers but is able to rapidly adapt to distribution shifts. In complement to \cite{belilovsky2020decoupled}, this module can deal with online distributions and regularly update a code-book that memorizes some attributes of the current stream of data. It allows to both reduce significantly the local memory of a
node as well as the transmission between two subsequent machines, without significantly decreasing the final accuracy of our models. We further show that for a fixed budget of memory, this new method outperforms the algorithm introduced in \cite{belilovsky2020decoupled}.}  

 The paper is structured as follows. In Sec. 2 we propose an optimization procedure for a decoupled greedy learning objective that achieves \textit{update unlocking} and then extend it to an asynchronous setting (async-DGL) using a replay buffer, addressing \textit{forward unlocking}. {Further, we introduce a new quantization module that reduces the memory use of our method}. In Sec. 3 we show that the proposed optimization procedure converges and recovers standard rates of non-convex optimization, motivating empirical observations in the subsequent experimental section. In Sec. 4 we  show that DGL can outperform competing methods in terms of scalability to larger and deeper models and stability to optimization hyperparameters and overall parallelism, allowing it to be applied to large datasets such as ImageNet. {In Sec. \ref{sec:xp_qt}, we test our new quantization module on 
 CIFAR-10.} We extensively study async-DGL and find that it is robust to significant delays. {In several experiments we show that buffer quantization improves both performance and memory consumption.} We also empirically study the impact of parallelized training on convergence. Code for experiments is included in the submission.

\section{Parallel Decoupled Greedy Learning}

In this section we 
formally define the greedy objective and parallel optimization which we 
study in both the synchronous and asynchronous setting. We 
mainly consider the online setting and assume a stream of samples or mini-batches denoted $\mathcal{S}\triangleq\{(x_0^t,y^t)\}_{t\leq T}$, run during $T$ iterations.
\subsection{Preliminaries}
For comparison purposes, we briefly review the update unlocking approach from  DNI \citep{jaderberg2016decoupled}. There, each network module has an associated \textit{auxiliary net} which, given the output activation of the module, predicts the gradient signal from subsequent modules: the module can thus perform an update while modules above are still forward processing. The DNI auxiliary model is trained by using true gradients provided by upper modules when they become available, requiring activation caching. This also means that the auxiliary module can become out of sync with the changing output activation distribution, often requiring slow learning rates. Due to this and the high dimensionality of the predicted gradient which scales with module size, this estimate is challenging. One may ask how well a method that entirely avoids the use of feedback from upper modules would fare given similarly-sized auxiliary networks. We will show that adapting the objective in \citep{shallow,bengio2007greedy} can also allow for update unlock and a degree of forward unlocking, with better properties.

\setlength{\textfloatsep}{1pt}
    \begin{algorithm2e}\small
    \caption{Synchronous DGL}\label{algo:basic}
  \SetAlgoLined
  \DontPrintSemicolon
\KwIn{Stream $\mathcal{S}\triangleq\{(x_0^t,y^t)\}_{t\leq T}$ of samples or mini-batches.}
\textbf{Initialize} Parameters $\{\theta_j,\gamma_j\}_{j\leq J}$.\;
\For { $(x_0^t,y^t) \in \mathcal{S}$ }
{
\For {$j \in 1,..., J$}
   {$x^t_j \leftarrow f_{\theta_{j-1}}(x^t_{j-1})$.\;
Compute $\nabla_{(\gamma_j,\theta_j)}\hat{ \mathcal{L}}(y^t,x^t_j;\gamma_j,\theta_j)$.\;
$(\theta_j,\gamma_j)\leftarrow$Update parameters $(\theta_j,\gamma_j)$.
   }
 }
\end{algorithm2e}\begin{algorithm2e}\small
 \SetAlgoLined
  \DontPrintSemicolon
    \KwIn{Stream $\mathcal{S}\triangleq\{(x_0^t,y^t)\}_{t\leq T}$;  Distribution of the delay $p=\{p(j)\}_{j}$; Buffer size $M$.}
 \textbf{Initialize:} Buffers $\{B_j\}_{j}$; params $\{\theta_j,\gamma_j\}_{j}$.\\
 \While{\normalfont{\textbf{ training}}}{Sample $j$ in $\{1,...,J\}$ following $p$.\\
   \uIf{$j=1$}{ $ (x_{0},y)\gets \mathcal{S}$}\Else{ $(x_{j-1},y)\gets B_{j-1}$.}
   $x_j \leftarrow f_{\theta_{j-1}}(x_{j-1})$.\;
    Compute $\nabla_{(\gamma_j,\theta_j)}\hat{ \mathcal{L}}(y,x_j;\gamma_j,\theta_j)$.\;
     $(\theta_j,\gamma_j)\leftarrow$ Update parameters $(\theta_j,\gamma_j)$.\;
    \lIf{$j<J$}{
    $B_{j} \gets (x^{j},y)$.
           }}
    \caption{Asynchronous DGL with Replay\label{algo:buffer_sym}}\end{algorithm2e}

\begin{algorithm2e}\small
 \SetAlgoLined
  \DontPrintSemicolon
    \KwIn{Stream $\mathcal{S}\triangleq\{(x_0^t,y^t)\}_{t\leq T}$;  Distribution of the delay $p=\{p(j)\}_{j}$; Buffer size $M$. Codebooks $\{\mathcal{C}_j\}_j$. Codebook update delay $T_{code}$}
 \textbf{Initialize:} Buffers $\{B_j\}_{j}$; params $\{\theta_j,\gamma_j\}_{j}$; codebooks $\{\mathcal{C_j}\}_j$.\\
 \While{\normalfont{\textbf{ training}}}{Sample $j$ in $\{1,...,J\}$ following $p$.\\
   \uIf{$j=1$}{ $ (x_{0},y)\gets \mathcal{S}$}\Else{ $(\tilde x_{j-1},y)\gets B_{j-1}$.\;
   $x_{j-1}=\text{UnQuantize}(\tilde x_{j-1},\mathcal{C}_j)$
   }
   $x_j \leftarrow f_{\theta_{j-1}}(x_{j-1})$.\;
    Compute $\nabla_{(\gamma_j,\theta_j)}\hat{ \mathcal{L}}(y,x_j;\gamma_j,\theta_j)$.\;
     $(\theta_j,\gamma_j)\leftarrow$ Update parameters $(\theta_j,\gamma_j)$.\;
      If $j\%T_{code}$: \textsc{ReceiveCodebooks} $\{\mathcal{C}_j\}$.\; 
    \lIf{$j<J$}{
    $B_{j} \gets (\text{Quantize}[x^{j}],y)$.
           }
      
           }
           
    \caption{Asynchronous DGL with Replay and Quantized modules \label{algo:buffer_sym_quantized}}\end{algorithm2e}
  



\subsection{Optimization for Greedy Objective} \label{sec:main}

Let $\bm{X}_0$ and $Y$  be the data and labels, $\bm{X_j}$ be the output representation for module $j$. We will denote the per-module objective function $\hat{\mathcal{L}}(\bm{X}_{j}, Y;\theta_{j},\gamma_{j})$, where the parameters $\theta_j$ correspond to the module parameter (i.e. $\bm{X}_{j+1} =f_{\theta_{j}}(\bm{X}_{j})$). Here $\gamma_j$ represents parameters of the auxiliary networks used to predict the final target and compute the local objective. $\hat{\mathcal{L}}$ in our case will be the empirical risk with a cross-entropy loss. The greedy training objective is thus given recursively by defining $P_j$:
\begin{equation}
\min_{\theta_j,\gamma_j}\hat{\mathcal{L}}(\bm{X}_{j}, Y;\theta_{j},\gamma_{j})\tag{$P_j$}\label{eq:dap},
\end{equation}
where $\bm{X}_{j} =f_{\theta_{j-1}^*}(\bm{X}_{j-1})$ and $\theta_{j-1}^*$ is the minimizer of Problem  
(\(P_{j-1}\)). 
A natural way to solve the optimization problem for $J$ modules, $(P_J)$, is thus by sequentially solving the problems $\{P_{j}\}_{j\leq J}$ starting with $j=1$. This is the approach taken in e.g. \citet{Marquez,huang2017learning,bengio2007greedy,shallow}. 
Here we consider an alternative procedure for optimizing the same objective, which we refer to as Sync-DGL. It is outlined in Alg \ref{algo:basic}. In Sync-DGL individual updates of each set of parameters are performed in parallel across the different layers. Each layer processes a sample or mini-batch, then passes it to the next layer, while simultaneously performing an update based on its own local loss. Note that at line $5$ the subsequent layer can already begin computing line $4$. Therefore, this algorithm achieves update unlocking. Once $x_j^t$ has been computed,   subsequent layers can begin processing. Sync-DGL can also be seen as a generalization of the biologically plausible learning method proposed in concurrent work \citep{nokland2019training}.
Appendix D  also gives an explicit version of an equivalent multi-worker pseudo-code. Fig.~\ref{fig:dni_dgl} illustrates the decoupling compared to how samples are processed in the DNI algorithm. 

 In this work we solve the sub-problems $P_j$ by backpropagation, but we note that any iterative solver available for $P_j$ will be applicable (e.g. \cite{choromanska2018beyond}).
 Finally we emphasize that unlike the sequential solvers of (e.g. \citet{bengio2007greedy,shallow}) the distribution of inputs to each sub-problem solver changes over time, resulting in a learning dynamic whose properties have never been studied nor contrasted with sequential solvers.

\subsection{Asynchronous DGL with Replay} \label{sec:asynch}

We can now extend this framework to address
\textit{forward unlocking} \citep{jaderberg2016decoupled}. DGL 
modules already do not depend on their successors for updates. 
We can further reduce dependency on the previous modules such that they can operate asynchronously. 
This is achieved via a replay buffer that is shared between adjacent modules, enabling them to reuse older samples. 
Scenarios with communication delays or substantial variations in speed between layers/modules
benefit from this. 
We study one instance of such an algorithm that uses a replay buffer of size $M$, shown in Alg. \ref{algo:buffer_sym} and illustrated in Fig.~\ref{fig:dni_dgl}. 





Our minimal distributed setting is as follows. Each worker $j$ has a buffer that it writes to and that worker $j+1$ can read from. The buffer uses a simple read/write protocol. A buffer $B_j$ lets layer $j$ write new samples. When it reaches capacity it overwrites the oldest sample. Layer $j+1$ requests samples from the buffer $B_j$. They are selected by a last-in-first-out (LIFO) rule, with precedence for the least reused samples. 
Alg. \ref{algo:buffer_sym} simulates potential delays in such a setup by the use of a probability mass function (pmf) $p(j)$ over workers, analogous to typical asynchronous settings such as \citep{leblond2017asaga}. At each iteration, a layer is chosen at random according to $p(j)$ to perform a computation. In our experiments we  limit ourselves to pmfs that are uniform over workers except for a single layer which is chosen to be selected less frequently on average. Even in the case of a uniform pmf, asynchronous behavior will naturally arise, requiring the reuse of samples.  Alg.~\ref{algo:buffer_sym} permits a controlled simulation of processing speed discrepancies and will be used over  settings of $p$ and $M$ to demonstrate that training and testing accuracy remain robust in practical regimes. Appendix D  also provides 
pseudo-code for implementation in a parallel environment.

Unlike common data-parallel asynchronous algorithms \citep{elasticSGD}, the asynchronous DGL does not rely on a master node and requires only local communication similar to recent decentralized schemes \citep{lian2017asynchronous}. Contrary to decentralized SGD, DGL nodes only need to maintain and update the parameters of their local module, permitting much larger modules.  
Combining asynchronous DGL with distributed synchronous SGD for sub-problem optimization is a promising direction. For example it can alleviate a common issue of the popular distributed synchronous SGD in deep CNNs, which is the often limiting maximum batch size \citep{im1hr}. 

\subsection{Reducing the memory- and communication footprint of Asynchronous DGL}\label{subsec:qt}

\begin{figure*}
    \centering
    \includegraphics[width=1.0\linewidth]{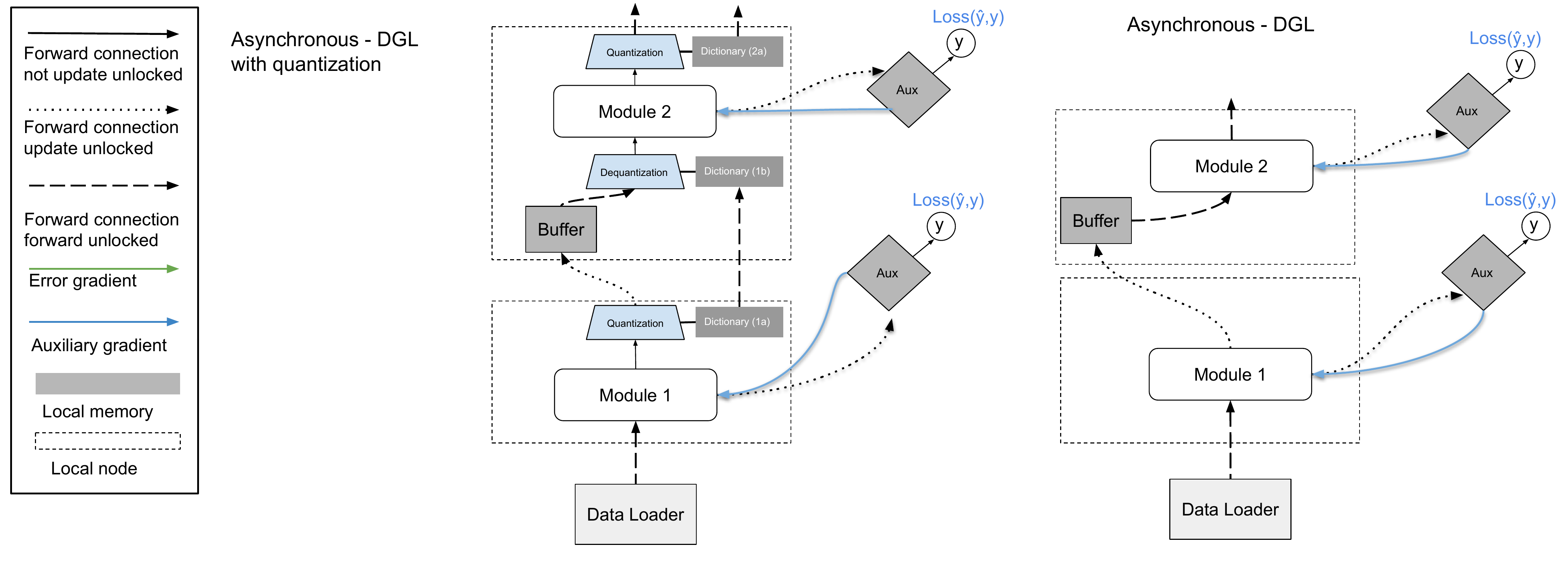}
    \caption{Schematic diagram that explains how the buffers and encoder/decoder quantization modules are incorporated. (left) with quantization (right) without quantization.}
    \label{fig:dni_dgl_QT}
\end{figure*}

\if False
The typical objective of an  optimization procedure distributed along $H$ nodes is to linearly reduce by   $\mathcal{O}(H)$ the training time compared to using 
a single node. However, in practical applications, the speed-up is in general sub-linear 
due to
communication issues: 
bandwidth restrictions (e.g., a bandwidth bottleneck or a significant lag between distant nodes) can substantially increase the communication time.
For instance, in a Federated Learning setting \cite{philippenko2020artemis}, a master node will have to send $H$ copies of its parameters to the slave nodes, and if the communication time is not significantly smaller than the local computation time of each of the $H$ nodes, no speed-up can be observed. Furthermore,  decentralized asynchronous applications like ours can require 
storing
intermediate
results and this can significantly increase the local memory footprint needed by a node.

\fi 

The typical objective of an  optimization procedure distributed along $H$ nodes is to linearly reduce by   $\mathcal{O}(H)$ the training time compared to using 
a single node. However, in practical applications, the speed-up is in general sub-linear due to communication issues: bandwidth restrictions (e.g., a bandwidth bottleneck or a significant lag between distant nodes) can substantially increase the communication time. In a distributed use case Sync-DGL and Async-DGL would be bottlenecked by communication bandwidth. As the model grows larger, features must be sent across nodes. Similarly for Async DGL the local memory footprint might also be an issue if each node is a device with limited available computational resources. We propose a solution to this problem based on an online dictionary learning algorithm, which is able to rapidly adapt a compression algorithm to the changing features at each node, leading to a large reduction in communication bandwidth and memory. The method is illustrated in Fig. \ref{fig:dni_dgl_QT} and the algorithm we used is given in Alg. \ref{algo:buffer_sym_quantized}.

As illustrated in the Figure \ref{fig:dni_dgl_QT} we propose to incorporate a quantization module that relies on a codebook with $C$ atoms. 
Following \cite{van2017neural}, the quantization step works as follows: each output feature is assigned to its closest atom in its local encoding codebook and the decoding step consists 
simply in recovering the corresponding atom via its local decoding codebook. The  numbers of bits required to encode a single 
quantized vector is thus $\lceil\log_2(C)\rceil$ bits.


During training, the distribution of features at each layer is changing, so the codebooks must be updated online. We must also send the codes to the subsequent node to synchronize the codebooks of two distant communicating modules, so that the following node can decode.  Notably the rate at which codebooks are synchronized need not be the same as the rate at which features are sent. We write $\alpha\in [0,1]$ the synchronization rate of the codebooks: We only synchronize the codebook during a selected fraction $\alpha$ of the training iterations. Empirically we will illustrate in the sequel that the codebooks can be synchronized infrequently as compared to the rate a module sends out features. The codebook is updated via an EM algorithm  that is learning to minimize the reconstruction error of a given batch of samples, as done in 
\cite{van2017neural, caccia2019online}. 
In order to deal with batches of data, the dictionary is updated with Exponential Moving Averages (EMA).

We will now discuss the bandwidth and memory savings of the quantization module by deriving explicit equations. First, let us introduce the necessary notations. In the following, at a given module indexed by $j$, we write $\mathcal{B}$ the batch size of a batch $x_j$ of features with dimension $K_j \times N_j^2$, where $N_j$ corresponds to the spatial size of $x_j$ and $K_j$ is the number of channels. Assuming the variables are coded 
as 32-bit floating point numbers%
, this implies that without quantization, a batch of features will require $32\mathcal{B}N_j^2K_j\,\text{bits}$. Similarly, the buffer memory, which is required in this case, corresponds to $32MN_j^2K_j$, where we always have $M>\mathcal{B}$ in order to avoid sampling issues for sampling a new batch of data.

As in \cite{caccia2019online, van2017neural} we incorporate a spatial structure to our encoding-decoding mechanism: our quantization algorithm will encode
the feature vector at
each spatial location
using the same encoding procedure,
leading to a spatial array of codebook indices.
Furthermore, as done in \cite{caccia2019online}, we assume that we use $k=32$ codebooks to encode respectively each fraction $\lfloor \frac{K_j}{k}\rfloor$ of the channel of a given batch. This implies that the communication between two successive modules will require for a single batch of size $\mathcal{B}$:
\begin{equation}
    \mathcal{B}kN_j^2\lceil\log_2(C)\rceil+\alpha 32 (K_j+K_{j-1})C\,\text{ bits.}
\end{equation}
Obtained in a similar fashion, the memory footprint of the buffer will be reduced to:
\begin{equation}
   MkN_j^2\lceil\log_2(C)\rceil+ 32K_jC\,\text{ bits}\,.
\end{equation}

\begin{figure*}[t]
    \centering
    \includegraphics[scale=0.55]{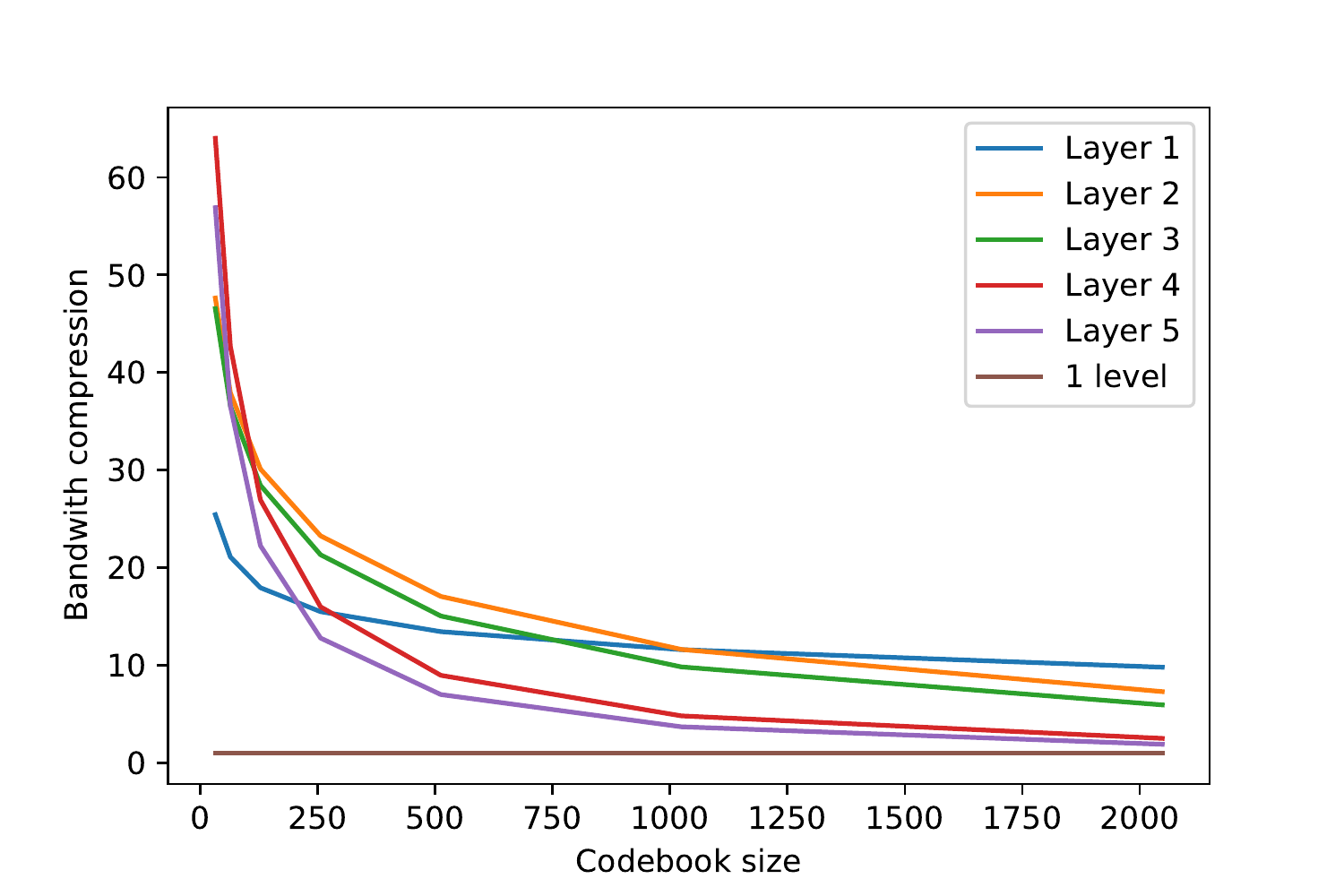}\includegraphics[scale=0.55]{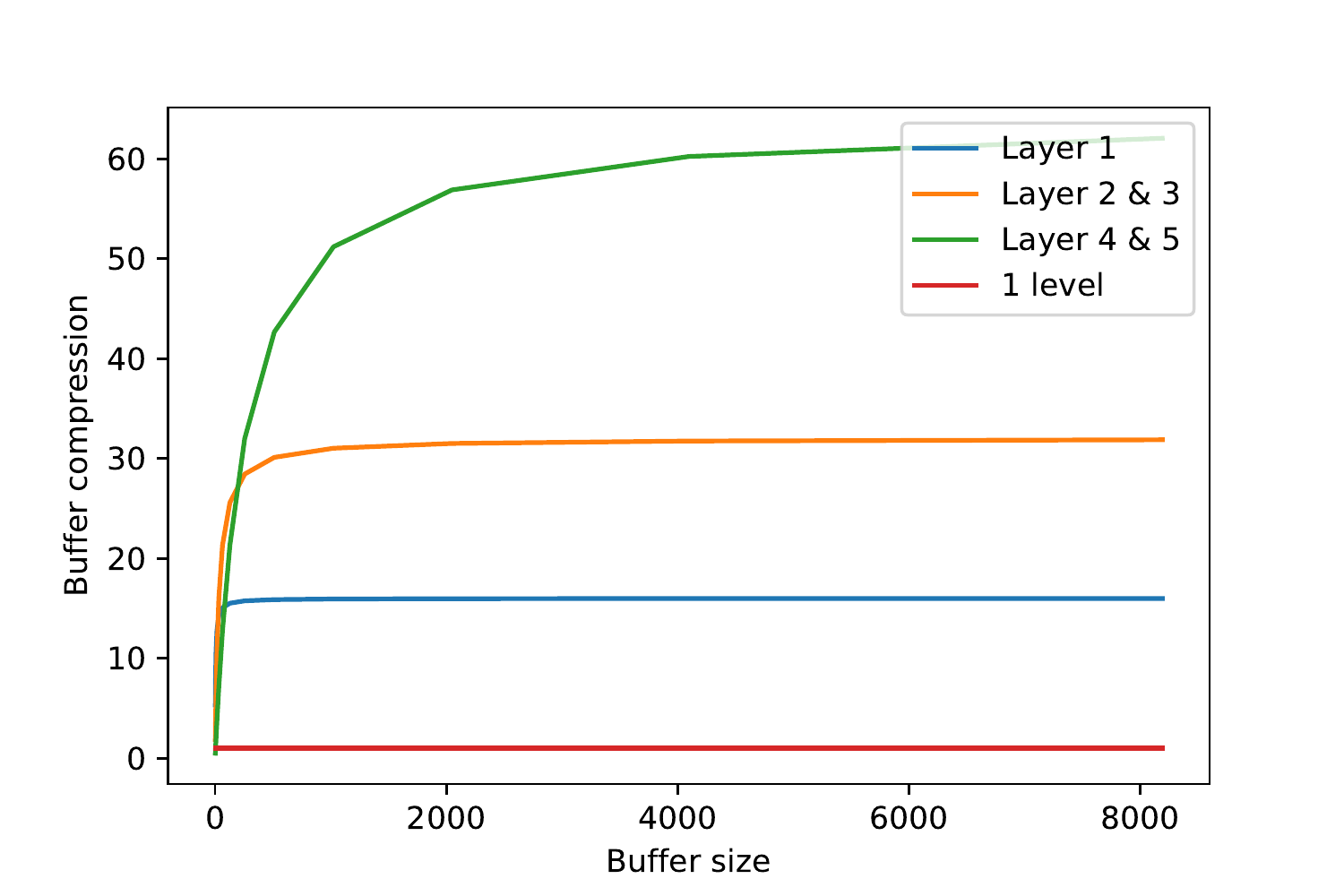}
    \caption{Illustration of bandwidth and buffer memory Compression factors as given by Eq. ~\eqref{eq:bandwith} and Eq. ~\eqref{eq:buffer} for $\alpha=1$ at various depths for a specific CNN model (from Sec. \ref{sec:xp_qt}). On the left figure, we display the bandwidth compression factor as a function of the number of codebook atoms. The right figure corresponds to the buffer compression factor when the buffer size increases. We observe that in all cases we have substantial reduction in bandwidth and memory. Note the buffer size memory compression is variable due to the size of the codebook. For simplicity, we employed the architecture of Sec. \ref{replay}. }
    \label{fig:compression_factors}
\end{figure*}

As a consequence, we can define two different compression factors when implementing the quantization inside the buffer. The bandwidth compression is defined as 
\begin{equation}\label{eq:bandwith}
C_b  = \frac{32\mathcal{B}N_j^2K_j}{\mathcal{B}kN_j^2\log_2(C)+\alpha 32 (K_j+K_{j-1})C},
\end{equation}
and indicates an improvement when it is greater than 1.
The buffer compression is defined as
\begin{equation} \label{eq:buffer}
C_n  = \frac{32MN_j^2K_j}{MkN_j^2\log_2(C)+ 32K_jC},
\end{equation}
and indicates an improvement if it is greater than 1.

As an illustration in Fig.~\ref{fig:compression_factors}  displays the bandwidth reduction factor for a reference network, similar to one to be studied in the sequel, as a function of codebook size $C$.  Our reference network has $N_j\in [32, 16,16, 8,8]$, $B=128$, $K_j\in [128, 256, 256, 512, 512]$, and $\alpha=1$, as well as the buffer memory reduction factor as a function of the buffer size $M$ at a constant codebook size of $C=256$. Note that in both cases, the memory footprint used by the codebook is small compared that of the quantized features. The compression at the buffer level increases when the buffer does. It reaches a threshold defined by the ratio of codebook encoding size and uncompressed channel sizes. However the Bandwidth compression decreases, as increasing codebook size becomes more and more similar to working in 32-bit precision.

\subsection{Auxiliary and Primary Network Design}\label{subsec:aux}

Like DNI our procedure relies on an auxiliary network to obtain update signal. Both methods thus require auxiliary network design in addition to the main CNN architecture. \citet{shallow} have shown that spatial averaging operations can be used to construct a scalable auxiliary network for the same objective as used in Sec~\ref{sec:main}. However, they did not directly consider the parallel training use case, where additional care must be taken in the design: The primary consideration is the relative speed of the auxiliary network with respect to its associated main network module. We will use primarily FLOP count in our analysis and aim to restrict our auxiliary networks to be $5\%$ of the main network. 

Although auxiliary network design might seem like an additional layer of complexity in CNN design and may require invoking slightly different architecture principles, this is not inherently prohibitive since architecture design is often related to training (e.g., the use of residuals  is originally motivated by optimization issues inherent to end-to-end backprop \citep{he2016deep}). 

Finally, we note that although we focus on the distributed learning context, this algorithm and associated theory for greedy objectives is generic and has other potential applications. For example greedy objectives have recently been used in \citep{haarnoja18a,huang2017learning} and even with a single worker DGL reduces memory.

\section{Theoretical Analysis}
We now study the convergence results of DGL. Since we do not rely on any approximated gradients, we can derive stronger properties than DNI \cite{czarnecki2017}, such as a rate of convergence in our non-convex setting.
To do so, we analyze  Alg. \ref{algo:basic} when the update steps are obtained from stochastic gradient methods. We show convergence guarantees \citep{bottou2018optimization} under reasonable assumptions. In standard stochastic optimization schemes, the input distribution fed to a model is fixed \citep{bottou2018optimization}. In this work, the input distribution to each module is time-varying and dependent on the convergence state
of the previous module. At time step $t$, for simplicity we will denote all parameters of a module (including auxiliary) as $\Theta_j^t\triangleq(\theta_j^t,\gamma_j^t)$, and samples as $Z_j^t\triangleq (X_j^t,Y^t)$, which follow the density $p_j^t(z)$.  For each auxiliary problem, we aim to prove the strongest existing guarantees \citep{bottou2018optimization,Huo2018} for the  non-convex setting despite time-varying input distributions from prior modules. Proofs 
are given in the Appendix.

Let us fix a depth $j$, such that $j>1$ and consider the converged density of the previous layer, $p^*_{j-1}(z)$. We  consider the total variation distance: $c^t_{j-1}\triangleq \int |p_{j-1}^t(z)-p_{j-1}^*(z)|\,dz$. Denoting $\ell$ the composition of the non-negative loss function and the network, we will study the expected risk $\mathcal{L}(\Theta_{j})\triangleq \mathbb{E}_{p^*_{j-1}}[\ell(Z_{j-1};\Theta_j)]$. We will now state several standard assumptions we use.
\begin{assumption}[$L$-smoothness] $\mathcal{L}$ is differentiable and its gradient is  $L$-Lipschitz.\end{assumption}
We consider the SGD scheme with learning rate $\{\eta_t\}_t$:
\begin{equation}
\hspace{-0.5em}\Theta^{t+1}_{j}=\Theta^t_j-\eta_t \nabla_{\Theta_j} \ell(Z_{j-1}^t;\Theta_j^t),
\end{equation}
where \(Z^t_{j-1}\sim p_{j-1}^t\).

 \begin{assumption}[Robbins-Monro conditions] The step sizes satisfy $\sum_t\eta_t=\infty$ yet $\sum_t\eta_t^2<\infty$.\end{assumption}
 
\begin{assumption}[Finite variance] There exists $G>0$ such that $\forall t,\Theta_j, \mathbb{E}_{p^t_{j-1}}\big[\Vert\nabla_{\Theta_j}\ell(Z_{j-1};\Theta_j)\Vert^2\big]\leq G$.\end{assumption}

The Assumptions 1, 2 and 3 are standard \citep{bottou2018optimization, Huo2018}, and we show in the following  that our proof of convergence leads to similar rates, up to a multiplicative constant. The following assumption is specific to our setting where we consider a time-varying distribution:
\begin{assumption}[Convergence of the previous layer] We assume that $\sum_{t} c^t_{j-1}<\infty$.\end{assumption}
\begin{lemma}Under Assumption 3 and 4, for all $\Theta_j,$ one has $ \mathbb{E}_{p^*_{j-1}}\big[\Vert\nabla_{\Theta_j}\ell(Z_{j-1};\Theta_j)\Vert^2\big]\leq G$.\end{lemma}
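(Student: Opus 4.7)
The plan is to use the total-variation convergence $p_{j-1}^t\to p_{j-1}^*$, which follows from Assumption 4 because $\sum_t c^t_{j-1}<\infty$ forces $c^t_{j-1}\to 0$, and pair it with the uniform-in-$t$ bound from Assumption 3 in order to transport the inequality from the time-varying density $p_{j-1}^t$ to the limiting density $p_{j-1}^*$. Fix an arbitrary $\Theta_j$ and abbreviate $f(z)=\|\nabla_{\Theta_j}\ell(z;\Theta_j)\|^2$, which is nonnegative but has no a priori pointwise bound.

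The single genuine difficulty is precisely this lack of a sup-norm bound on $f$: the naive estimate $\int f\,(p_{j-1}^*-p_{j-1}^t)\,dz\leq \|f\|_\infty\, c^t_{j-1}$ is unavailable because only $\int f\,dp_{j-1}^t$ is controlled. I sidestep it by truncation. Let $f_N(z)=\min(f(z),N)$. Since $f_N$ is bounded by $N$, TV convergence gives
\[
\left|\int f_N\,dp_{j-1}^* - \int f_N\,dp_{j-1}^t\right|\leq N\,c^t_{j-1}\longrightarrow 0 \quad(t\to\infty),
\]
so $\int f_N\,dp_{j-1}^*=\lim_{t\to\infty}\int f_N\,dp_{j-1}^t\leq \liminf_{t\to\infty}\int f\,dp_{j-1}^t\leq G$, where the middle inequality uses $f_N\leq f$ pointwise and the last one is Assumption 3.

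Finally, since $0\leq f_N\uparrow f$ pointwise as $N\to\infty$, the monotone convergence theorem yields
\[
\mathbb{E}_{p_{j-1}^*}\bigl[\|\nabla_{\Theta_j}\ell(Z_{j-1};\Theta_j)\|^2\bigr] \;=\; \int f\,dp_{j-1}^* \;=\; \sup_N \int f_N\,dp_{j-1}^* \;\leq\; G.
\]
Since $\Theta_j$ was arbitrary, this is exactly the uniform bound stated in the lemma, and it is precisely the ingredient needed to plug an Assumption-3-style finite-variance condition into the classical non-convex SGD analysis at the subsequent layer, despite the input distribution being time-varying.
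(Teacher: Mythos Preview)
Your proof is correct but proceeds by a genuinely different route from the paper. The paper first uses the full summability $\sum_t c^t_{j-1}<\infty$ together with Fubini to deduce that $p^t_{j-1}(z)\to p^*_{j-1}(z)$ almost everywhere, and then applies Fatou's lemma directly to the nonnegative integrands $p^t_{j-1}(z)\,\Vert\nabla_{\Theta_j}\ell(z;\Theta_j)\Vert^2$, giving
\[
\int p^*_{j-1}(z)\,\Vert\nabla_{\Theta_j}\ell(z;\Theta_j)\Vert^2\,dz
\;=\;\int \liminf_t p^t_{j-1}(z)\,\Vert\nabla_{\Theta_j}\ell(z;\Theta_j)\Vert^2\,dz
\;\leq\;\liminf_t \int p^t_{j-1}(z)\,\Vert\nabla_{\Theta_j}\ell(z;\Theta_j)\Vert^2\,dz
\;\leq\; G.
\]
You instead truncate $f$ at level $N$, exploit boundedness to pass the integral to the limit via the TV distance, and finish with monotone convergence as $N\to\infty$. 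Your argument is arguably cleaner: it uses only $c^t_{j-1}\to 0$, which is strictly weaker than the summability in Assumption~4, whereas the paper's Fatou step requires pointwise convergence of the densities and hence invokes the full summability. Both are valid; yours is more elementary and slightly more general in its hypotheses.
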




We are now ready to prove the core statement for the convergence results in this setting:
\begin{lemma}\label{lemma:main}Under Assumptions 1, 3 and 4, we have:
\begin{eqnarray*}
\mathbb{E}[\mathcal{L}(\Theta_j^{t+1})] & \leq & \mathbb{E}[\mathcal{L}(\Theta_j^{t})]+\frac{LG}{2}\eta_t^2\,\\
{} & {} & -\eta_t\big(\mathbb{E}[\Vert\nabla\mathcal{L}(\Theta_j^t)\Vert^2] -G\sqrt{2c^t_{j-1}}\big).
\end{eqnarray*}
\end{lemma}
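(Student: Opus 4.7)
The plan is to combine the standard $L$-smoothness descent inequality with a bias bound that quantifies how far the stochastic gradient drawn from the time-varying distribution $p_{j-1}^t$ deviates from the true gradient of $\mathcal{L}$, which is an expectation under $p_{j-1}^*$.

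\textbf{Step 1 (descent inequality).} Starting from Assumption 1, I would apply the standard quadratic upper bound implied by $L$-smoothness along the SGD update, yielding
\begin{equation*}
\mathcal{L}(\Theta_j^{t+1}) \leq \mathcal{L}(\Theta_j^t)-\eta_t\langle \nabla\mathcal{L}(\Theta_j^t),\nabla_{\Theta_j}\ell(Z_{j-1}^t;\Theta_j^t)\rangle+\frac{L\eta_t^2}{2}\Vert \nabla_{\Theta_j}\ell(Z_{j-1}^t;\Theta_j^t)\Vert^2.
\end{equation*}
Taking expectation over $Z_{j-1}^t\sim p_{j-1}^t$ (conditionally on $\Theta_j^t$), the quadratic term is immediately bounded by $G$ using Assumption 3.

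\textbf{Step 2 (bias of the gradient).} Since $\nabla\mathcal{L}(\Theta_j^t)=\mathbb{E}_{p_{j-1}^*}[\nabla_{\Theta_j}\ell(Z_{j-1};\Theta_j^t)]$, the bias of the sample gradient under $p_{j-1}^t$ is $\Delta^t=\int \nabla_{\Theta_j}\ell(z;\Theta_j^t)(p_{j-1}^t(z)-p_{j-1}^*(z))\,dz$. To control its norm, I would use the triangle inequality followed by Cauchy--Schwarz and the elementary inequality $|p_{j-1}^t-p_{j-1}^*|\leq p_{j-1}^t+p_{j-1}^*$:
\begin{equation*}
\Vert\Delta^t\Vert \leq \int\Vert\nabla\ell\Vert\,|p_{j-1}^t-p_{j-1}^*|\,dz \leq \Bigl(\int\Vert\nabla\ell\Vert^2(p_{j-1}^t+p_{j-1}^*)\,dz\Bigr)^{1/2}\Bigl(\int|p_{j-1}^t-p_{j-1}^*|\,dz\Bigr)^{1/2}.
\end{equation*}
By Assumption 3 and Lemma 1, the first factor is at most $\sqrt{2G}$; the second is $\sqrt{c_{j-1}^t}$ by definition, giving $\Vert\Delta^t\Vert\leq\sqrt{2Gc_{j-1}^t}$.

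\textbf{Step 3 (controlling the cross term).} Writing $\mathbb{E}_{p_{j-1}^t}[\nabla\ell]=\nabla\mathcal{L}(\Theta_j^t)+\Delta^t$, Cauchy--Schwarz gives
\begin{equation*}
\langle\nabla\mathcal{L}(\Theta_j^t),\mathbb{E}_{p_{j-1}^t}[\nabla\ell]\rangle \geq \Vert\nabla\mathcal{L}(\Theta_j^t)\Vert^2-\Vert\nabla\mathcal{L}(\Theta_j^t)\Vert\cdot\Vert\Delta^t\Vert.
\end{equation*}
To eliminate the residual $\Vert\nabla\mathcal{L}\Vert$ factor and match the target form $G\sqrt{2c_{j-1}^t}$, I would bound $\Vert\nabla\mathcal{L}(\Theta_j^t)\Vert$ using Jensen's inequality together with Lemma 1: $\Vert\nabla\mathcal{L}(\Theta_j^t)\Vert^2=\Vert\mathbb{E}_{p_{j-1}^*}[\nabla\ell]\Vert^2\leq\mathbb{E}_{p_{j-1}^*}[\Vert\nabla\ell\Vert^2]\leq G$, so $\Vert\nabla\mathcal{L}\Vert\leq\sqrt{G}$. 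Combining these two bounds yields $\langle\nabla\mathcal{L},\mathbb{E}_{p_{j-1}^t}[\nabla\ell]\rangle\geq\Vert\nabla\mathcal{L}\Vert^2-G\sqrt{2c_{j-1}^t}$.

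\textbf{Step 4 (assembly).} Plugging the Step 2 quadratic bound and the Step 3 cross-term lower bound into Step 1, then taking total expectation over the trajectory up to time $t$, gives exactly the claimed inequality. The main obstacle, and the only nonstandard piece relative to classical SGD analysis, is the bias bound in Step 2: the Cauchy--Schwarz splitting together with the trick of using $\sqrt{G}$ as a uniform bound on $\Vert\nabla\mathcal{L}\Vert$ (via Lemma 1 and Jensen) is precisely what converts a $\Vert\nabla\mathcal{L}\Vert\sqrt{2Gc_{j-1}^t}$ penalty into the clean $G\sqrt{2c_{j-1}^t}$ appearing in the statement.
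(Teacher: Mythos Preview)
Your proposal is correct and follows essentially the same route as the paper: the $L$-smoothness descent inequality, Assumption~3 for the quadratic term, the Cauchy--Schwarz splitting of $\int\|\nabla\ell\|\,|p^t-p^*|$ (with $|p^t-p^*|\le p^t+p^*$ inside one factor) to obtain $\sqrt{2Gc_{j-1}^t}$, and finally Jensen plus Lemma~1 to bound $\|\nabla\mathcal{L}\|\le\sqrt{G}$ and absorb it into the constant. The only cosmetic difference is that you apply the bound $\|\nabla\mathcal{L}\|\le\sqrt{G}$ pointwise before taking the outer expectation, whereas the paper first takes the expectation and then uses $\mathbb{E}[\|\nabla\mathcal{L}\|]\le\sqrt{\mathbb{E}[\|\nabla\mathcal{L}\|^2]}\le\sqrt{G}$; both yield the same result.
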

The
expectation is taken over each random variable. Also, note that without the temporal dependency (i.e. $c_j^t=0$), this becomes analogous to Lemma 4.4 in \citep{bottou2018optimization}.  Naturally it follows, that
\begin{proposition}
Under Assumptions 1, 2, 3 and 4, each term of the following equation converges:

\begin{align}
\sum_{t=0}^T \eta_t
\mathbb{E}[\Vert\nabla\mathcal{L}(\Theta_j^t)\Vert^2] &\leq  \mathbb{E}[\mathcal{L}(\Theta^0_j)]\nonumber\\
&+G\sum_{t=0}^T\eta_t\left(\sqrt{2c_{j-1}^t}+\frac{L\eta_t}{2}\right)\nonumber.
\end{align}
\end{proposition}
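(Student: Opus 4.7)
The plan is to derive the bound by telescoping Lemma \ref{lemma:main} over $t$ and then to establish convergence of each remaining series using Assumptions 2 and 4 together with a Cauchy--Schwarz argument.

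First, I would rearrange the conclusion of Lemma \ref{lemma:main} to isolate the gradient-norm term on the left, obtaining
\begin{equation*}
\eta_t\,\mathbb{E}[\Vert\nabla\mathcal{L}(\Theta_j^t)\Vert^2] \;\leq\; \mathbb{E}[\mathcal{L}(\Theta_j^{t})] - \mathbb{E}[\mathcal{L}(\Theta_j^{t+1})] + \frac{LG}{2}\eta_t^2 + \eta_t G\sqrt{2c^t_{j-1}}.
\end{equation*}
Summing this inequality from $t=0$ to $T$ telescopes the difference of expected losses to $\mathbb{E}[\mathcal{L}(\Theta_j^{0})] - \mathbb{E}[\mathcal{L}(\Theta_j^{T+1})]$. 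Since $\ell$ is a composition of a non-negative loss and the network, $\mathcal{L}\geq 0$, so dropping $-\mathbb{E}[\mathcal{L}(\Theta_j^{T+1})]\leq 0$ from the right-hand side yields exactly the bound claimed in the proposition.

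The second part is to show that each of the three series (the left-hand side, the $\eta_t^2$ sum, and the $\eta_t\sqrt{c_{j-1}^t}$ sum) converges as $T\to\infty$. The $\eta_t^2$ series converges by Assumption 2 (Robbins--Monro). For the cross term $\sum_t \eta_t\sqrt{c_{j-1}^t}$ the key step, and the main obstacle of the argument, is that neither $\eta_t$ being square-summable nor $c_{j-1}^t$ being summable is, by itself, enough. My plan is to apply Cauchy--Schwarz in the form
\begin{equation*}
\sum_{t=0}^{T}\eta_t\sqrt{c_{j-1}^t} \;\leq\; \Bigl(\sum_{t=0}^{T}\eta_t^2\Bigr)^{1/2}\Bigl(\sum_{t=0}^{T}c_{j-1}^t\Bigr)^{1/2},
\end{equation*}
which is finite by Assumptions 2 and 4. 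Finally, the left-hand side $\sum_{t=0}^T \eta_t\mathbb{E}[\Vert\nabla\mathcal{L}(\Theta_j^t)\Vert^2]$ is monotonically non-decreasing in $T$ and bounded above by the now-finite right-hand side, hence it converges as well. This completes the proof plan.
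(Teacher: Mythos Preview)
Your proposal is correct and follows essentially the same approach as the paper: telescope Lemma~\ref{lemma:main}, drop the non-negative $\mathbb{E}[\mathcal{L}(\Theta_j^{T+1})]$, and control $\sum_t \eta_t\sqrt{c_{j-1}^t}$ via Cauchy--Schwarz using Assumptions~2 and~4. You are in fact more explicit than the paper about the Cauchy--Schwarz step and the monotone-bounded argument for the left-hand side.
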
Thus the DGL scheme converges in the sense of \citep{bottou2018optimization,Huo2018}.
 We can also  obtain the following rate:
\begin{corollary} The sequence of expected gradient norm accumulates around 0 at the following rate:
\begin{equation}
\inf_{t\leq T}\mathbb{E}[\Vert \nabla\mathcal{L}(\Theta_j^t)\Vert^2]\leq\mathcal{O}\left(\frac{\sum_{t=0}^T\sqrt{c_{j-1}^t}\eta_t}{\sum_{t=0}^T\eta_t}\right)\,.\end{equation}\end{corollary}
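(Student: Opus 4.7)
The plan is to deduce this Corollary directly from the Proposition that immediately precedes it, using the standard "infimum versus weighted average" trick. Since $\eta_t \geq 0$ and $\mathbb{E}[\Vert\nabla\mathcal{L}(\Theta_j^t)\Vert^2]\geq 0$, one has
\[
\Big(\sum_{t=0}^T\eta_t\Big)\,\inf_{t\leq T}\mathbb{E}[\Vert\nabla\mathcal{L}(\Theta_j^t)\Vert^2]
\;\leq\;\sum_{t=0}^T \eta_t\,\mathbb{E}[\Vert\nabla\mathcal{L}(\Theta_j^t)\Vert^2].
\]
Dividing by $S_T\triangleq\sum_{t=0}^T\eta_t$ and plugging in the upper bound from the Proposition yields, after separating the three contributions on the right-hand side,
\[
\inf_{t\leq T}\mathbb{E}[\Vert\nabla\mathcal{L}(\Theta_j^t)\Vert^2]
\;\leq\; \frac{\mathbb{E}[\mathcal{L}(\Theta_j^0)]}{S_T}
+ \frac{G\sqrt{2}\sum_{t=0}^T\eta_t\sqrt{c_{j-1}^t}}{S_T}
+ \frac{LG}{2}\,\frac{\sum_{t=0}^T\eta_t^2}{S_T}.
\]

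Next I would argue that each of the three ratios on the right is absorbed into the claimed $\mathcal{O}$. Under Assumption 2 (Robbins-Monro), $S_T\to\infty$ and $\sum_t\eta_t^2<\infty$, so the first and the third term are both $\mathcal{O}(1/S_T)$. The middle term is exactly the expression appearing in the statement. The key observation needed to wrap everything into a single $\mathcal{O}$ symbol is that the initial loss is a finite constant independent of $T$, so it does not degrade the asymptotic rate; likewise $\sum_t\eta_t^2$ is bounded, turning the SGD variance contribution into a constant divided by $S_T$. Consequently the dominant, distribution-shift-dependent rate is the one written in the corollary.

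The main (minor) subtlety is what exactly is hidden inside $\mathcal{O}$: the proof shows an explicit inequality with three terms, and the $\mathcal{O}$ notation must be interpreted as absorbing both the $\mathbb{E}[\mathcal{L}(\Theta_j^0)]/S_T$ term and the $\sum_t\eta_t^2/S_T$ term (which are both $\mathcal{O}(1/S_T)$ under Assumption 2), together with the multiplicative constants $G\sqrt{2}$ and $LG/2$. Once that is understood, no further work is needed: the corollary is essentially a one-line consequence of the Proposition, mirroring how Ghadimi--Lan-style rates are extracted from the standard non-convex SGD descent lemma, with the novelty being that the additional drift term $G\sqrt{2c_{j-1}^t}$ appears due to the time-varying input distribution coming from the previous module. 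No additional technical obstacle arises, provided the inequality $\inf_t a_t\leq (\sum_t \eta_t a_t)/S_T$ is applied to the nonnegative sequence $a_t=\mathbb{E}[\Vert\nabla\mathcal{L}(\Theta_j^t)\Vert^2]$.
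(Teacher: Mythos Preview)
Your proposal is correct and matches the paper's approach: the paper does not even write out a proof of the Corollary, treating it as an immediate consequence of the Proposition via exactly the weighted-infimum trick you describe. Your discussion of how the $\mathbb{E}[\mathcal{L}(\Theta_j^0)]/S_T$ and $\sum_t\eta_t^2/S_T$ terms are absorbed into the $\mathcal{O}$ under Assumption~2 is the intended reading.
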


Thus compared to the sequential case, the parallel setting adds a delay that is controlled by  $\sqrt{c_{j-1}^t}$.



\begin{figure*}[t]
    \centering
    \includegraphics[scale=0.16]{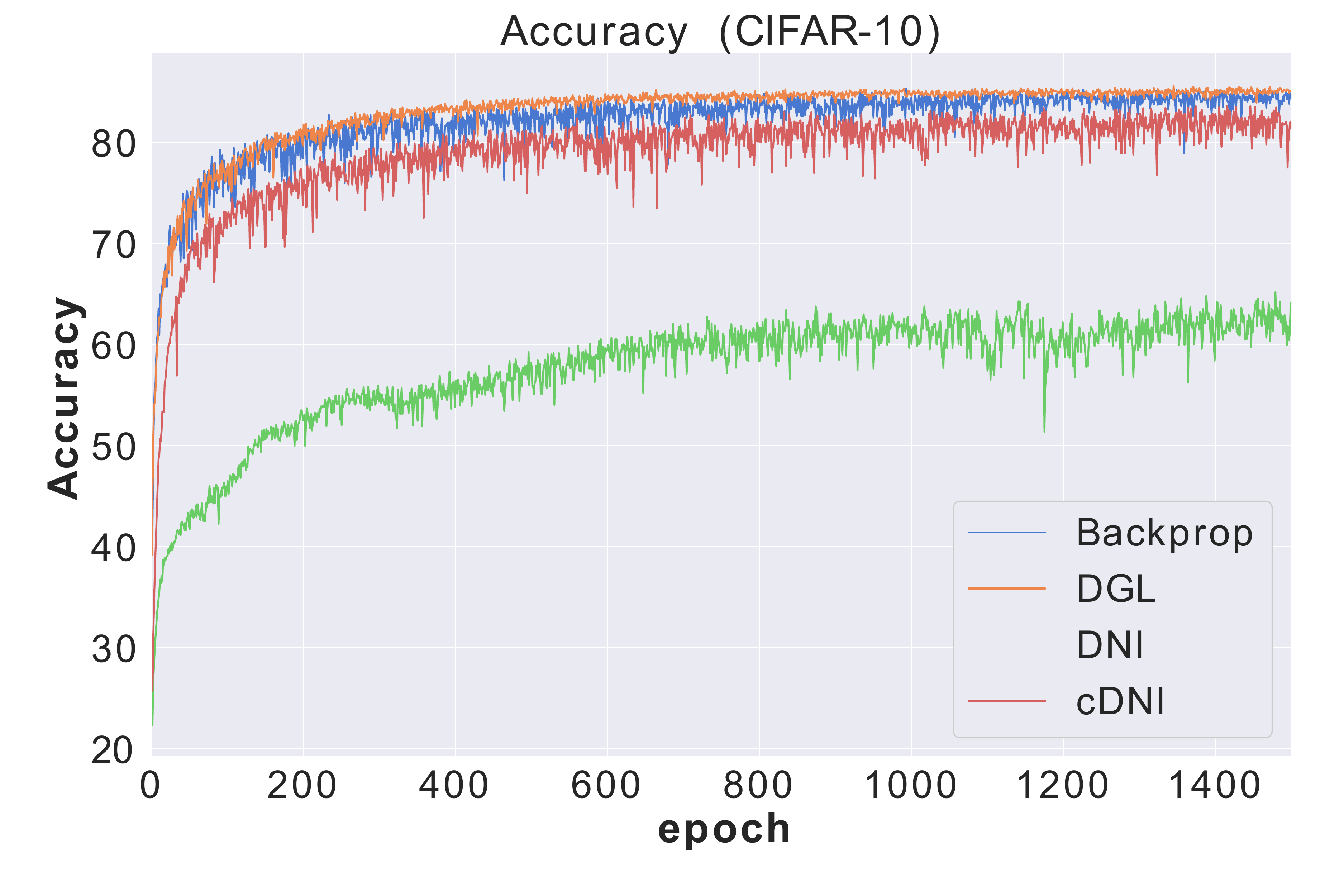}\includegraphics[scale=0.16]{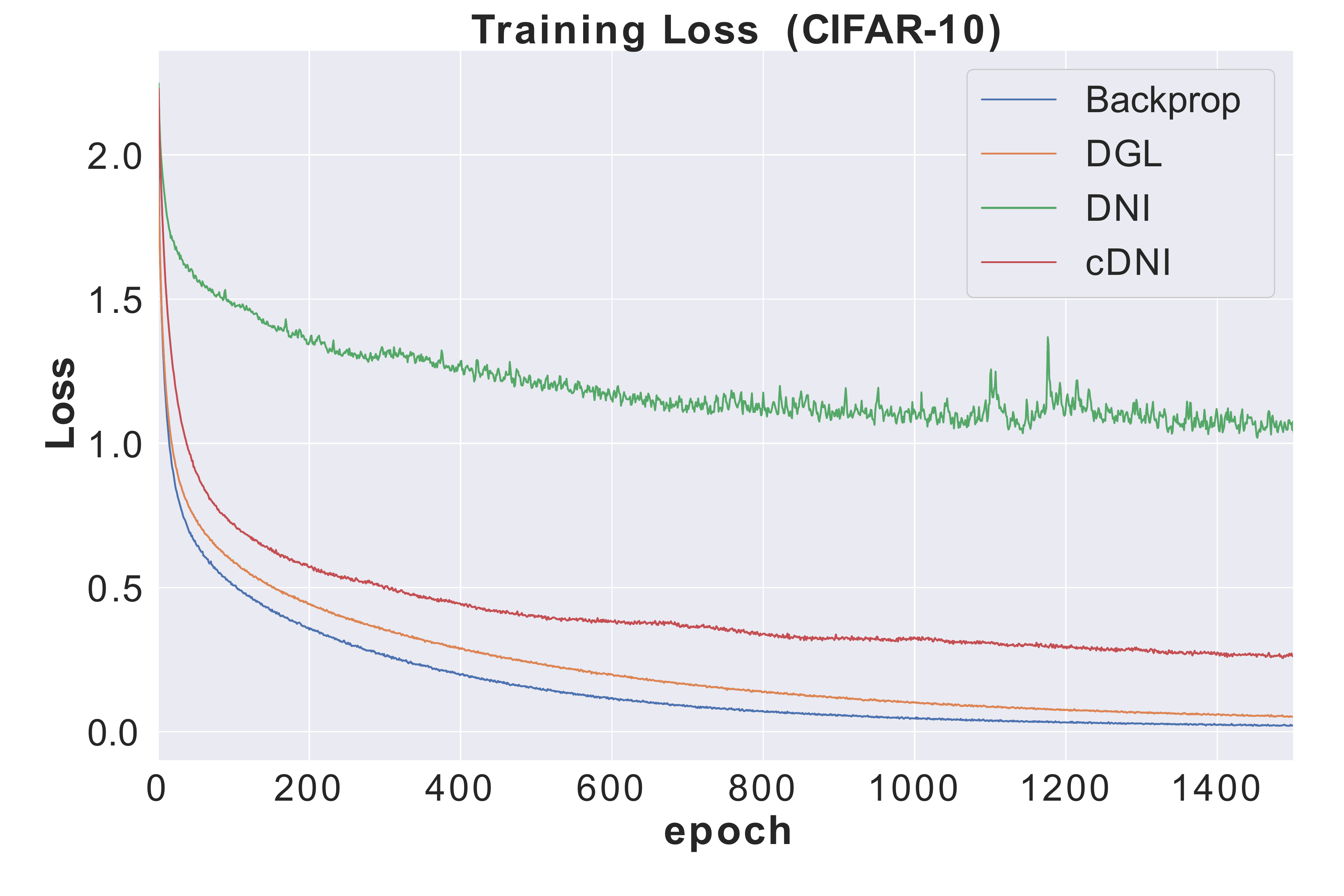}
    \caption{Comparison of DNI, cDNI, and DGL in terms of training loss and test accuracy for experiment from \citep{jaderberg2016decoupled}. DGL converges better than cDNI and DNI with the same auxiliary net. and generalizes better than backprop.}
    \label{fig:dni_comp}
\end{figure*}

\section{Experiments}
We conduct experiments that empirically show that DGL optimizes the greedy objective well, showing that it is favorable 
to
recent state-of-the-art proposals for decoupling training of deep network modules. We show that unlike previous decoupled proposals it can still work on a large-scale dataset (ImageNet) and that it can, in some cases, generalize better than standard back-propagation. We then extensively evaluate the asynchronous
version of
DGL, simulating large delays. For all experiments we use architectures taken from prior works and standard optimization settings. 

\subsection{Other Approaches and Auxiliary Network Designs}\label{sec:comparisons}
This section presents experiments evaluating DGL with the CIFAR-10 dataset~\citep{krizhevsky2009learning} and standard data augmentation. We  first use a  setup that permits us to compare against the DNI method and which also highlights the generality and scalability of DGL.  We then consider the design of a more efficient auxiliary network which will help to scale to the ImageNet dataset. We will also show that DGL is  effective at optimizing the greedy objective compared to a naive sequential algorithm.
\paragraph{Comparison to DNI} 
We reproduce the CIFAR-10 CNN experiment described in \citep{jaderberg2016decoupled}, Appendix C.1. This experiment utilizes a 3 layer network with auxiliary networks of 2 hidden CNN layers.  We compare our reproduction to the DGL approach. Instead of the final synthetic gradient prediction for the DGL we apply a final projection to the target prediction space. Here, we follow the prescribed optimization procedure from \citep{jaderberg2016decoupled}, using Adam with a learning rate of $3\times 10^{-5}$. We run training for 1500 epochs and compare standard backprop, DNI, context DNI (cDNI) \citep{jaderberg2016decoupled} and DGL.  Results are shown in Fig. \ref{fig:dni_comp}. Details are included in the Appendix. The DGL method outperforms DNI and the cDNI by a substantial amount both in test accuracy and training loss. Also in this setting, DGL can generalize better than standard backprop and obtains a close final training loss.

We also attempted DNI with the more commonly used optimization settings for CNNs (SGD with momentum and step decay), but found that DNI would diverge when larger learning rates were used, although DGL sub-problem optimization worked effectively with common CNN optimization strategies. We also note that the prescribed experiment uses a setting where the scalability of our method is not fully exploited. Each layer of the primary network of \citep{jaderberg2016decoupled} has a pooling operation, which permits the auxiliary network to be small for synthetic gradient prediction. This however severely restricts the architecture choices in the primary network to using a pooling operation at each layer. In DGL, we can apply the pooling operations in the auxiliary network, thus permitting the auxiliary network to be negligible in cost even for layers without pooling (whereas synthetic gradient predictions  often have to be as costly as the base network). Overall, DGL is more scalable, accurate and  robust to changes in optimization hyper-parameters than DNI.

\paragraph{Auxiliary Network Design} We consider different auxiliary networks for CNNs. As a baseline we use convolutional auxiliary layers as in \citep{jaderberg2016decoupled} and \citep{shallow}. For distributed training application this approach is sub-optimal as the auxiliary network can be substantial
in size
compared to the base network, leading to poorer parallelization gains. We note however that even in those cases (that we don't study here) where the auxiliary network computation is potentially on the order of the primary network, it can still give advantages for parallelization for very deep networks and many available workers. 

The primary network architecture we use for this study is a simple CNN similar to VGG family
of
models \citep{simonyan2014very} and those used in \citep{shallow}. It consists of 6 convolutions of size $3\times 3$, batchnorm and shape preserving padding, with $2\times2$ maxpooling at layers 1 and 3. The  width of the first layer is 128 and is doubled at each downsampling operation. The final layer does not have an auxiliary model-- it is followed by a pooling and 2-hidden layer fully connected network, for all experiments.  Two alternatives to the CNN auxiliary of \citep{shallow} are explored (Tab. \ref{tab:flop}).
    \begin{table}
    \centering
    \begin{tabular}{|c|c|c|}
    \hline&Relative FLOPS&Acc.\\\hline
     CNN-aux&   $200\%$ &92.2\\\hline
    MLP-aux&     $0.7\%$ &90.6\\\hline
     MLP-SR-aux& $4.0\%$  &91.2\\\hline

    \end{tabular}
    \caption{Comparison of auxiliary networks on CIFAR. CNN-aux applied in previous work is inefficient w.r.t. the primary module. We report flop count of the aux net relative to the largest module. MLP-aux and MLP-SR-aux applied after spatial averaging operations are far more effective with min. acc. loss. }
    \label{tab:flop}
    \end{table}

The baseline auxiliary strategy based on  \citep{shallow} and \citep{jaderberg2016decoupled} applies 2 CNN layers followed by a $2\times 2$ averaging and projection, denoted as \textit{CNN-aux}. 
First, we explore  a direct application of the spatial averaging to $2\times2$ output shape (regardless of the  resolution) followed by a 3-layer MLP (of constant width). This is denoted \textit{MLP-aux} and drastically reduces the FLOP count with minimal accuracy loss compared to \textit{CNN-aux}. Finally, we study a staged spatial resolution, first reducing the spatial resolution by 4$\times$ (and total size 16$\times$), then applying 3 $1\times 1$ convolutions followed by a reduction to $2\times 2$ and a 3 layer MLP, that we denote as \textit{MLP-SR-aux}. These latter two strategies that leverage the spatial averaging produce auxiliary networks that are less than $5\%$ of the FLOP count of the primary network even for large spatial resolutions as in real world image datasets. We will show that MLP-SR-aux is still effective even for the large-scale ImageNet dataset. We note that these more effective auxiliary models are not easily applicable in the case of DNI's gradient prediction. \paragraph{ Sequential vs. Parallel Optimization of Greedy Objective}
We briefly compare the sequential optimization of the greedy objective \citep{shallow,bengio2007greedy} to the DGL (Alg.  \ref{algo:basic}). We use a 6 layer CIFAR-10 network with an MLP-SR-aux auxiliary model. In parallel we train the layers together for 50 epochs and in the sequential training we train each layer for 50 epochs before moving to the subsequent one. Thus the difference to DGL lies only in the input received at each layer (fully converged previous layer versus not fully converged previous layer). The rest of the optimization settings are identical. Fig.~\ref{fig:greedy_v_parallel} shows comparisons of the learning curves for sequential training and DGL at layer 4 (layer 1 is the same for both as the input representation is not varying over the training period).  DGL quickly catches up with the sequential training scheme and appears to sometimes generalize better. Like \citet{oyallon2017building}, we also visualize the dynamics of training per layer in Fig.  \ref{fig:dynamics}, which demonstrates that after just a few epochs the individual layers build a dynamic of progressive improvement with depth. 
\begin{figure}[t]
\center
\includegraphics[scale=0.17]{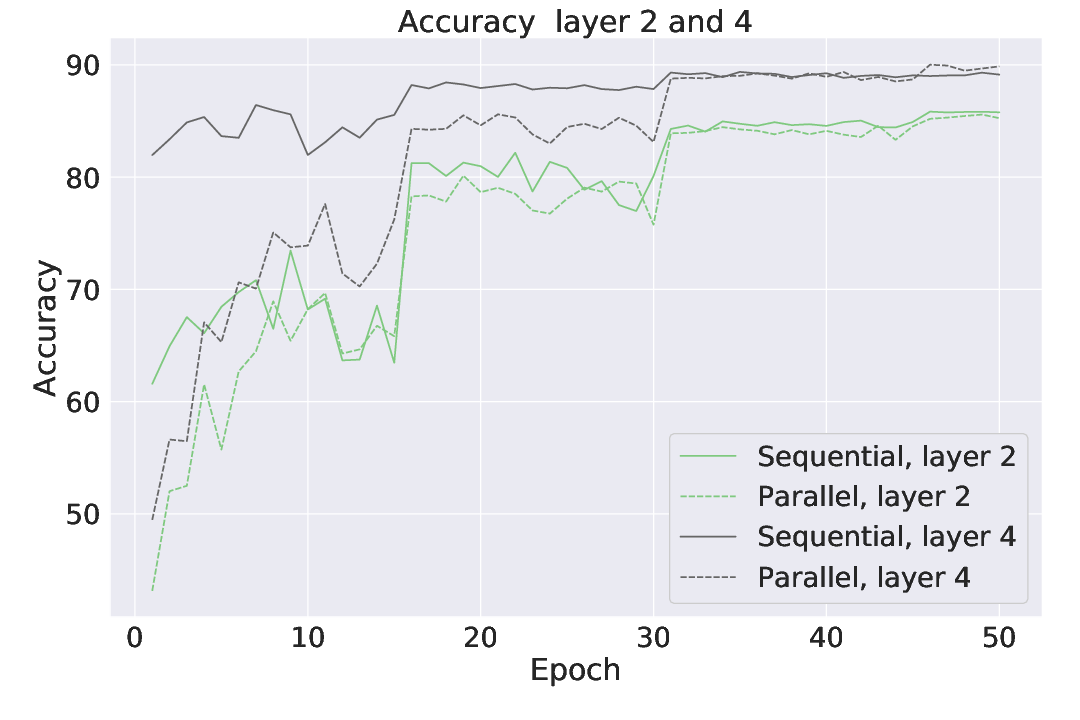}
    \caption{Comparison of sequential and parallel training. Parallel  catches up rapidly to sequential.}\label{fig:greedy_v_parallel}
\end{figure}
\begin{figure}
\center
    \includegraphics[scale=0.17]{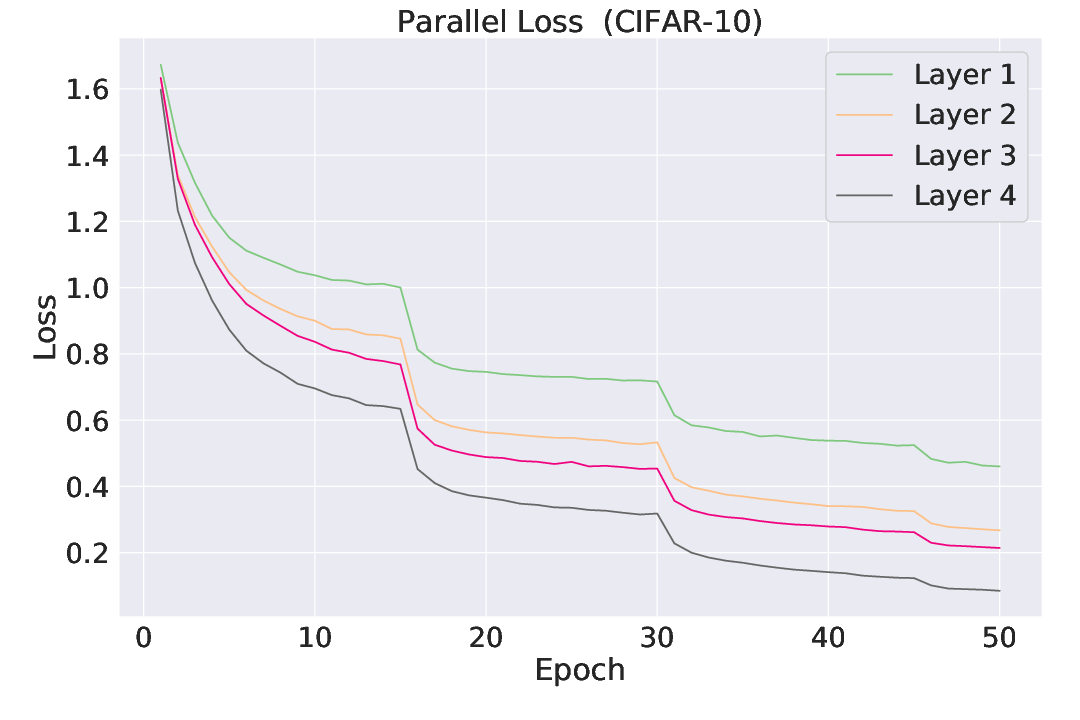}
    
    \caption{Per layer-loss on CIFAR: after few epochs,  the layers build a dynamic of progressive improvement in depth.}
    \label{fig:dynamics}

\end{figure}

\paragraph{Multi-Layer modules}
We have so far mainly considered the setting of layer-wise decoupling. This approach however can easily be applied to generic modules. Indeed, approaches such as  DNI \citep{jaderberg2016decoupled} often consider decoupling entire multi-layer modules. Furthermore the propositions for backward unlocking \citep{ddg,Huo2018} also rely on and report they can often only decouple 100 layer networks into 2 or 4 blocks before observing optimization issues or performance losses and require that the number of parallel modules be much lower than the network depth for the theoretical guarantees to hold. As in those cases, using  multi-layer decoupled modules can improve performance and is natural in the case of deeper networks.  We now use such a multi-layer approach to directly compare to the backward unlocking of \citep{ddg} and then subsequently we will apply this on deep networks for ImageNet. From here on we will denote $K$ the number of total modules a network is split into.


\paragraph{Comparison to DDG} \citet{ddg} propose a solution to the backward locking (less efficient than solving update-locking, see discussion in Sec~\ref{sec:rel}). 
\begin{table}
    \center\begin{tabular}{|c|c|c|}
        \hline
           Backprop &  DDG & DGL  \\\hline
           93.53 &  93.41 & $93.5\pm0.1$ \\\hline
    \end{tabular}
    \caption{ResNet-110($K=2$) for Backprop and DDG method  from \citep{ddg}. DGL is run for 3 trials to compute variance. They give the same acc. with DGL being update unlocked, DDG only backward unlocked. DNI is reported to not work in this setting \citep{ddg}.}
    \label{tab:ddg_comp}
\end{table}
We show that even in this situation the DGL method can provide a strong  baseline for work on backward unlocking. We take the experimental setup from \citep{ddg}, which considers a ResNet-110 parallelized into $K=2$ blocks. 
We use the auxiliary network MLP-SR-aux which has less than $0.1\%$ the FLOP count of the primary network. We use the exact optimization and network split points as in \citep{ddg}. 
 
To assess variance in  CIFAR-10 accuracy, we perform 3 trials. Tab. \ref{tab:ddg_comp} shows that the accuracy is the same across the DDG method, backprop, and our approach. DGL achieves better parallelization because it is update unlocked. We use the parallel implementation provided by \citep{ddg} to obtain a direct wall clock time comparison. We note that there are multiple considerations for comparing speed across these methods (see Appendix C). 

\paragraph{Wall Time Comparison} We compare to the parallel implementation of \citep{ddg} using the same communication protocols and run on the same hardware. We find for $K=2,4$ GPU gives a $~5\%, 18\%$ respectively speedup over DDG. With DDG $K=4$ giving approximately $2.3\times$ speedup over standard backprop on same hardware (close to results from \citep{ddg}).

\subsection{Large-scale Experiments}\label{sec:imagenet}

\begin{figure*}[t]
\begin{minipage}{\textwidth}
\begin{minipage}[b]{0.49\textwidth}
\centering
\includegraphics[scale=0.18]{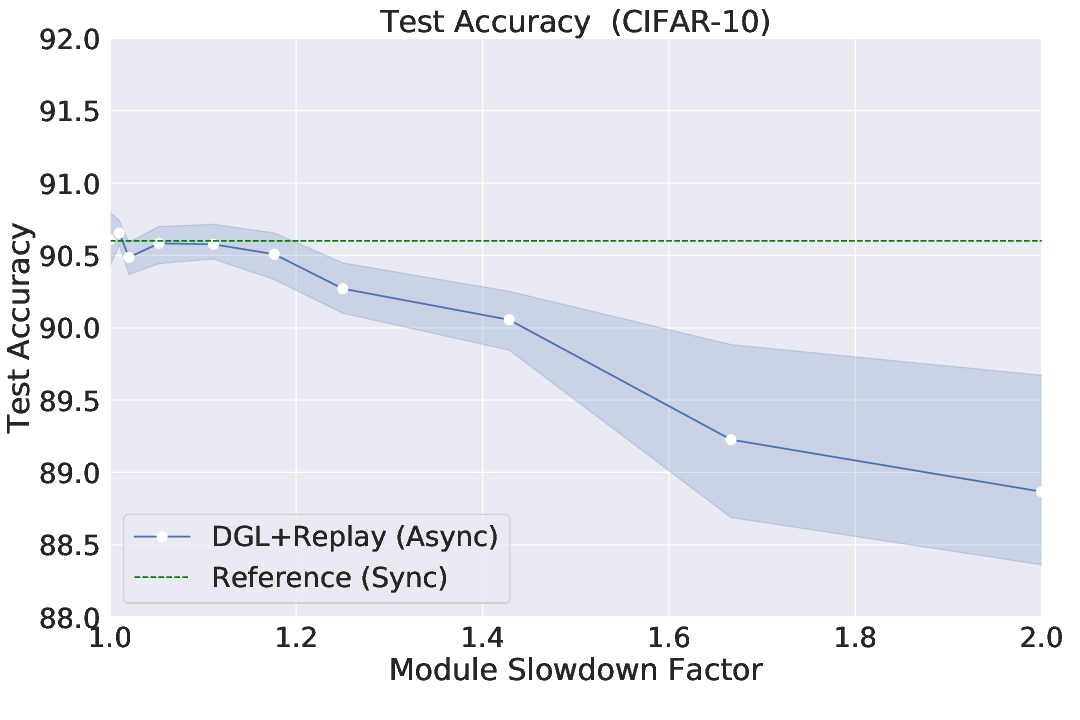}
\caption{Evaluation of Async DGL. A single layer is slowed down on average over others, with negligible losses of accuracy at even substantial delays.\label{fig:buffer}}
\end{minipage}
\hfill
\begin{minipage}[b]{0.48\textwidth}
\resizebox{.95\textwidth}{!}{\begin{tabular}{c|c|c}
Model (training method) & Top-1 & Top-5 \\\hline
VGG-13 (DGL per Layer, $K=10$) & 64.4 & 85.8 \\\hline
VGG-13 (DGL $K=4$) & \textbf{67.8}  & \textbf{88.0}\\\hline
VGG-13 (backprop) & 66.6 & 87.5 \\\hline\hline
VGG-19 (DGL $K=4$) & 69.2  & 89.0 \\\hline
VGG-19 (DGL $K=2$) & \textbf{70.8}  & \textbf{90.2} \\\hline
VGG-19 (backprop) & 69.7 & 89.7 \\\hline\hline
ResNet-152 (DGL $K=2$) & \textbf{74.5} &  92.0 \\\hline
ResNet-152 (backprop) & 74.4 &\textbf{92.1}\\\hline
\end{tabular}
}
\captionsetup{type=table} 
  \caption{ImageNet results using training schedule of \citep{Xiao2019} for DGL and standard e2e backprop. DGL with VGG and ResNet obtains similar or better accuracies, while enabling parallelization and reduced memory. \label{tab:imagenet_results}}
\end{minipage}
\end{minipage}
\end{figure*}
Existing methods considering update or backward locking have not been evaluated on large image datasets as they are often unstable or already show large losses in accuracy on smaller datasets. Here we study the optimization of several well-known architectures, mainly the VGG family \citep{simonyan2014very} and the ResNet \citep{he2016deep}, with DGL on the ImageNet dataset. 
In all our experiments we use the MLP-SR-aux auxiliary net which scales well from the smaller CIFAR-10  to the larger ImageNet. The final module has no auxiliary network. For all optimization of auxiliary problems and for end-to-end optimization of reference models we use the shortened optimization schedule prescribed in \citep{Xiao2019}.  Results are shown in Tab.~\ref{tab:imagenet_results}. We see that for all the models DGL can perform as well and sometimes better than the end-to-end trained models, while permitting parallel training. In all these cases the auxiliary networks are neglibile (see Appendix Table 4 for more details). For the VGG-13 architecture  we also evaluate the case where the model is trained layer by layer ($K=10$). Although here performance is slightly degraded, we find it is suprisingly high given that no backward communication is performed. We conjecture that improved auxiliary models and combinations with methods such as \citep{Huo2018} to allow feedback on top of the local model, may further improve performance. 
Also for the settings with larger potential parallelization, slower but more performant auxiliary models could potentially be considered as well.

The synchronous DGL has also favorable memory usage compared to DDG and to the DNI method, DNI requiring to store larger activations and DDG having high memory compared to the base network even for few splits \citep{Huo2018}. Although not our focus, the single worker version of DGL has favorable memory usage compared to standard backprop training. For example, the ResNet-152 DGL $K=2$ setting  can fit $38\%$ more samples on a single 16GB GPU than the  standard end-to-end training.

\subsection{Asynchronous DGL with Replay}\label{replay}
\begin{figure}
\begin{center}
\includegraphics[scale=0.4]{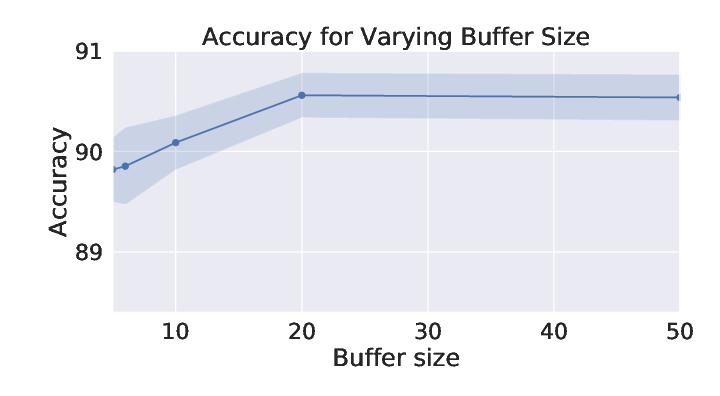}
\end{center}
\caption{Buffer size vs. Acc. for Async DGL. Smaller buffers produce only small loss in acc.}\label{fig:buffer2}\end{figure} 

We now study the effectiveness of
Alg. \ref{algo:buffer_sym} with respect to delays. We use a 5 layer CIFAR-10 network with the MLP-aux and with all other architecture and optimization settings as in the auxiliary network experiments of Sec.~\ref{sec:comparisons}.  
Each layer is equipped with a buffer of size $M$ samples.
At each iteration, a layer is chosen according to the pmf $p(j)$, and a batch selected from buffer $B_{j-1}$.
One layer is slowed down by decreasing its selection probability in the pmf $p(j)$ by a factor $S$.
We define $S = \frac{p}{p(j)}$, 
where $p$ is the constant probability of any other worker being selected, so $p = (1-p(j)) / (J - 1)$.
Taken together his implies that:
\begin{equation}
    S=\frac{1}{J-1}\left(\frac{1}{p(j)}-1\right)\label{slowdown}
\end{equation}
We evaluate
different slowdown factors (up to $S=2.0$). Accuracy versus $S$ is
shown in Fig. \ref{fig:buffer}. For this experiment we use a buffer of size $M=50$ samples. We run separate experiments with the slowdown applied at each 
of the 6 layers
of the network as well as 3 
random
seeds for each of these settings (thus 18 experiments per data point). We show the evaluations for 10 values of $S$.
To ensure 
a fair comparison we stop updating layers once they have completed 50 epochs, ensuring an identical number of gradient updates for all layers in all experiments.

In practice one could continue updating until all layers are trained. In Fig.~\ref{fig:buffer} we compare to the synchronous case. First, observe that the accuracy of the synchronous algorithm is maintained in the setting where $S=1.0$ and the pmf is uniform. Note that even this is a non-trivial case, as it will mean that layers inherently have random delays (as compared to  Alg. ~\ref{algo:basic}). Secondly, observe that accuracy is maintained until approximately $1.2\times$ and accuracy losses after that the difference remains small.
Note that even case $S=2.0$ is somewhat drastic: for 50 epochs, the slowed-down layer is only on epoch 25 while those following it are at epoch 50.

We now consider the
performance with respect to the buffer size. Results are shown in Fig.~\ref{fig:buffer2}. For this experiment we set $S=1.2\times$. Observe that even a tiny buffer size can yield only a slight loss in 
accuracy. Building on this demonstration there are multiple directions to improve Async DGL with replay.
For example improving the efficiency of the buffer~\cite{oyallon2018compressing}, by including data augmentation in feature space \citep{verma2018manifold}, mixing samples in batches, or improved batch sampling, among others.

\subsection{
Adaptive
Online 
Compression for Async-DGL with Replay}
\label{sec:xp_qt}



{We now evaluate the online compression proposed in Sec.~\ref{subsec:qt}. We consider the setting of the previous Sec. \ref{replay}. In the subsequent experiments on CIFAR-10 we will consider the same simplified CNN architecture, however to simplify the analysis we will focus on just a 4 module version of this model which will allow us to explore in more depth the behavior of the proposed approach.

Similarly to the previous section on Aync-DGL, we slow down the communication between selected layers. This permits to consider a scenario where each layer is hosted on a different node. In such a setting the communication bandwidth would naturally be bounded. We follow the same training strategy as above: Each layer is asynchronously optimized using SGD with an initial learning rate of 0.1 (with a decay factor
of $0.2$ every 15 epochs),  momentum of 0.9, a weight decay equal to $5\times 10^{-4}$, and a cross-entropy loss.After completing the equivalent of 50 epochs each layer stops performing updates and only passes signals to upper layers. As in the previous section we use a LIFO priority 
rule with a penalty for reuse. In all our experiments, results are reported as an averaging over 5 seeds. 

}

\begin{figure}[ht]
  \subfigure[Communication lag between layer 1-2]{
	\begin{minipage}[c][1\width]{
	   0.31\textwidth}
	   \centering
	   \includegraphics[width=1\textwidth]{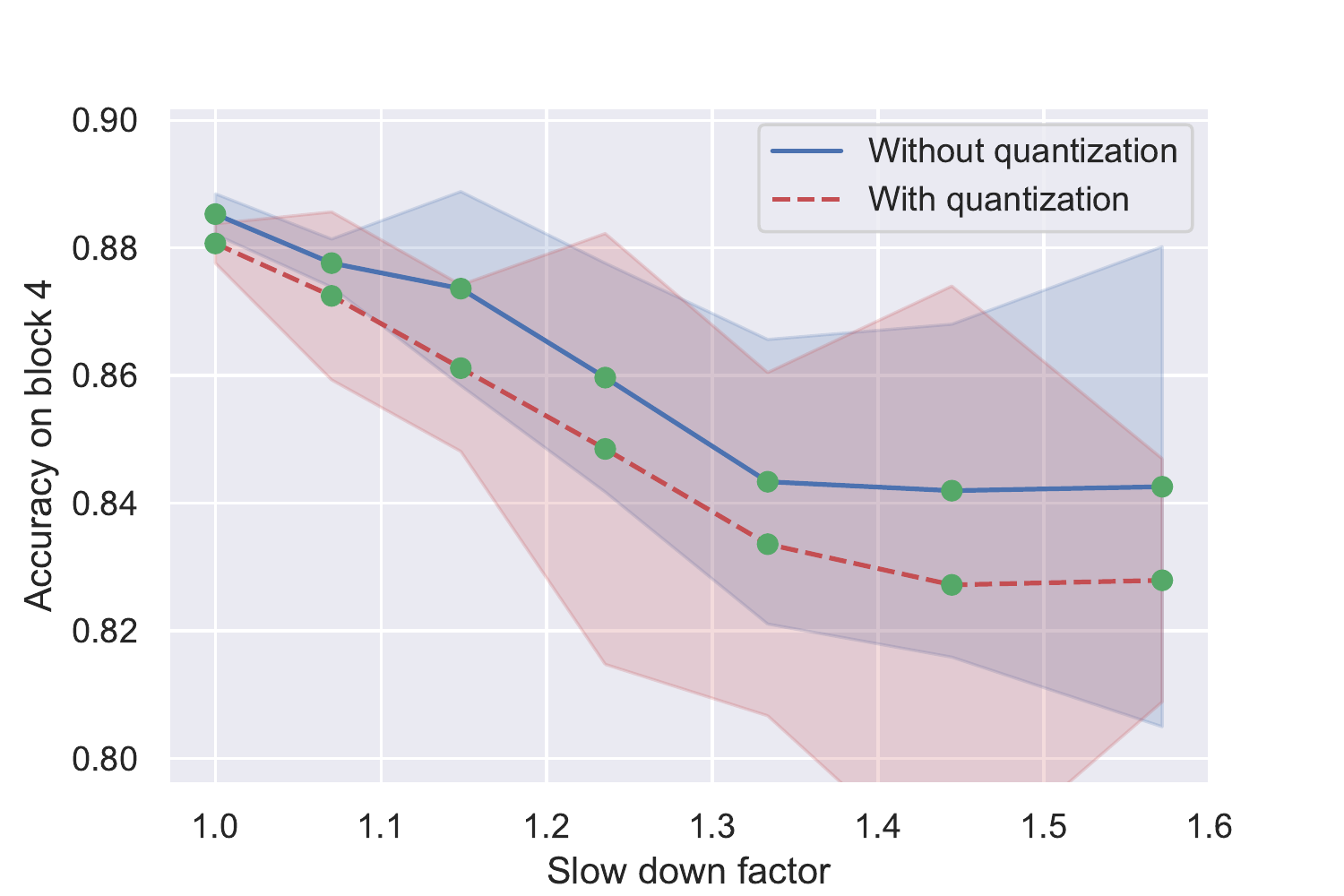}
	\end{minipage}}
 \hfill 	
  \subfigure[Communication lag between layer 2-3]{
	\begin{minipage}[c][1\width]{
	   0.31\textwidth}
	   \centering
	   \includegraphics[width=1\textwidth]{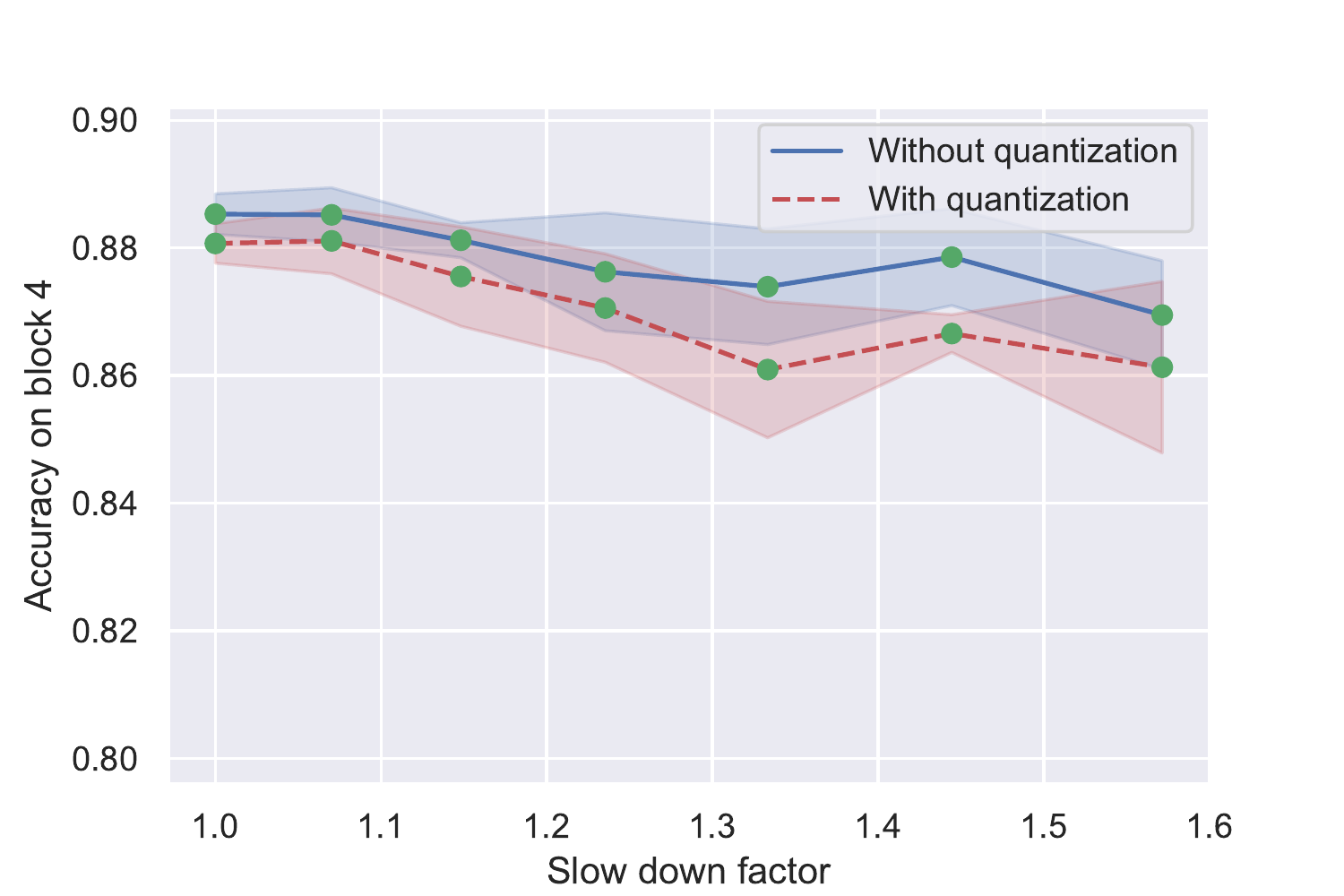}
	\end{minipage}}
 \hfill	
  \subfigure[Communication lag between layer 3-4]{
	\begin{minipage}[c][1\width]{
	   0.31\textwidth}
	   \centering
	   \includegraphics[width=1\textwidth]{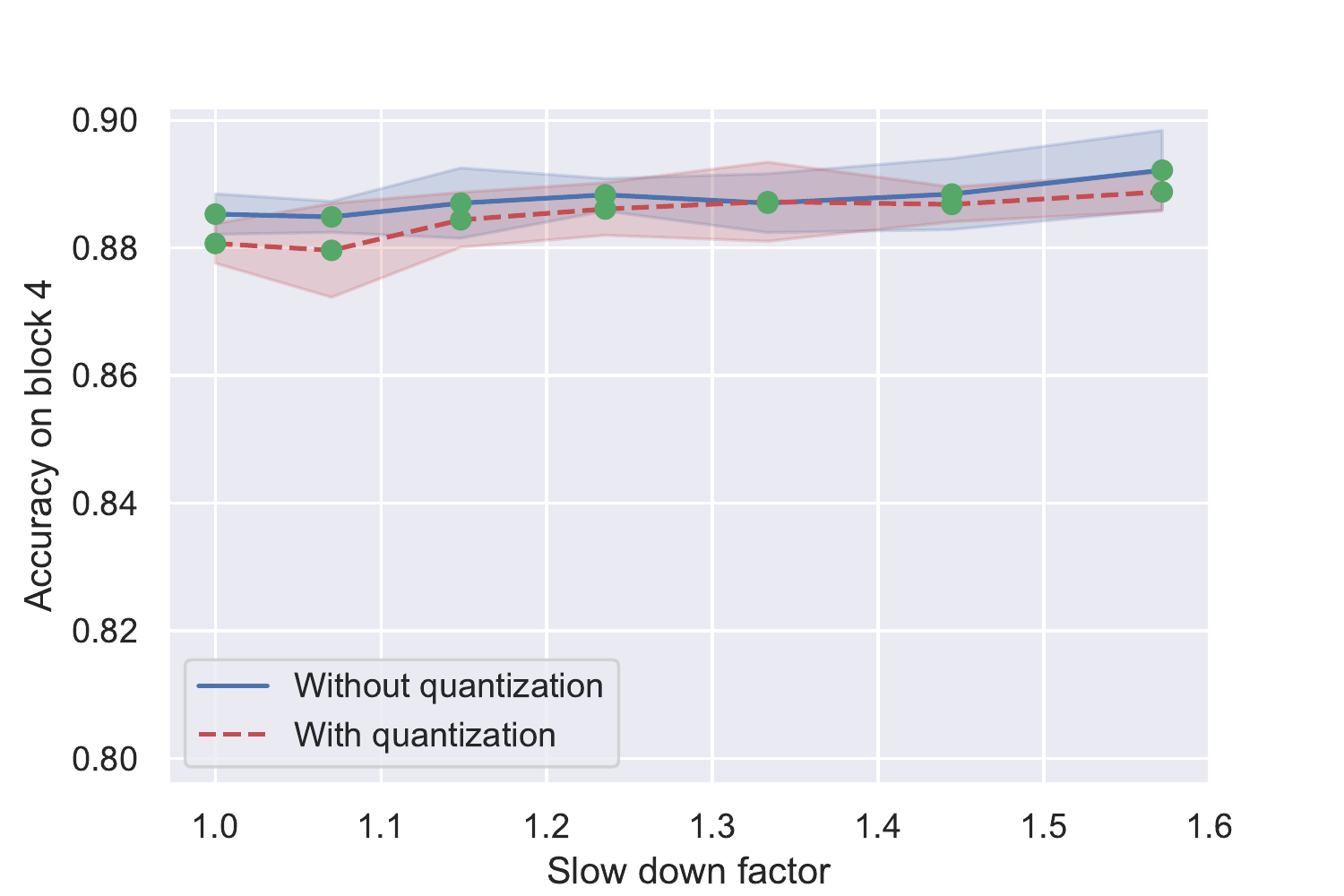}
	\end{minipage}}
\caption{{Comparison between quantization and no quantization: Accuracy when the communication between nodes $i,i+1$ is slowed down  for the layers $i=1,2,3$. 
Using quantization
we observe
a drop in performance when
the slow-down is
applied on early layers, but we also note that the slow-down has positive impact when applied to the layer 3. This is 
expected:
layer slow-downs will 
affect
the distribution of the subsequent layers, except if applied to the final layer which can then learn on top of almost converged features.}
}
\label{fig:qt_noise}
\end{figure}

We follow the adaptive quantization procedure proposed in Sec.  \ref{subsec:qt}  that encodes and decodes a modules local memory with a learned set of codebooks. With respect to this buffered memory and quantization procedure all experiments are conducted with a buffer of size $M=2$ batches (note the batch size is 128). The number of codebooks with 256 atoms, and a batch size of 128, unless
otherwise stated. In all our experiments, we kept constant the number of different codebooks at 32. Each setting was independently run with 5 random seeds: we report the average accuracy with its corresponding confidence interval at $95\%$ level.

The results in this setting are illustrated in Fig. \ref{fig:qt_noise}. Here we apply a slow-down factor in the range {$[1, 1.6]$} to each layer, as defined in \eqref{slowdown} and we keep all the other hyper-parameters fixed. We compare the numerical accuracy with and without quantization. Compared to the previous section, we refine our study by reporting the accuracy at  every depth instead of averaging the accuracies for each given slow-down factor regardless of the position at which it was applied.

Note that, thanks to the quantization module, the communication bandwidth is reduced by a factor $15.5, 23.3, 21.3$ respectively, and the buffer memory is reduced with a factor in $15.8, 28.4, 28.4$ respectively. 
We observe there are small, but potentially acceptable, accuracy losses with quantization depending on where the delay is induced. If it is in the first 2 layers the difference is typically less than $1.5\%$. This improves in cases where the delay is induced at a layer. We hypothesize the 3rd module shows less performance gains when slowed down as it receives features from the previous layers which are potentially changing less dramatically and are more stable. We note that even a $1.5\%$ accuracy loss can be compensated by ability to train much larger models due to parallelization thereby leading to potentially higher accuracy models overall. The results suggest that our Quantized Async-DGL is robust to the distribution shifts incurred by the lag. We now ablate various aspects of the Quantized Async DGL. 
\begin{figure}
\begin{center}
\includegraphics[scale=0.6]{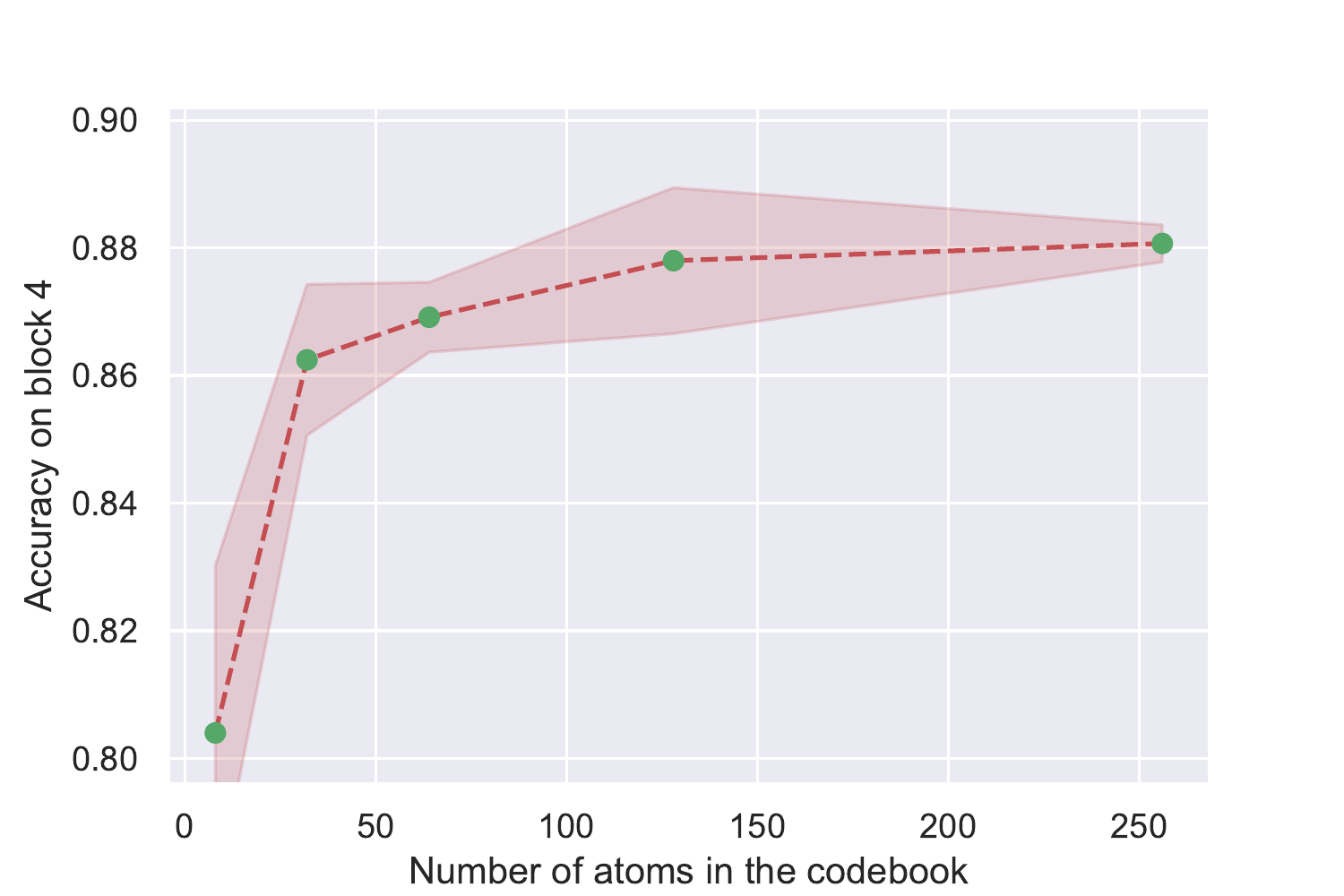}
\end{center}
\caption{Accuracy at the final layer of our model as a function of the number of atoms. Only 7 bits are necessary to obtain our top accuracies.
}
\label{fig:codebook}
\end{figure}

\paragraph{Codebook size} We 
study the impact of the codebook size on the accuracy of a quantized buffer.  
As expected, the number of atoms in the compression codebook is correlated with performance for small codebook sizes and it 
reaches a plateau
when the number of atoms is large enough. Indeed, the more items in our dictionary, the better is the approximation of the quantization. 
This means that for a large number of atoms, the accuracy behaves similarly to the non-compressed version,
whereas with a small number of atoms, the accuracy depends on the compression rate.
For these experiments, no nodes are artificially slowed down.
Our results in Fig. \ref{fig:codebook} illustrate the robustness of our approach. Indeed 128 or 256 atoms are enough to reach standard performances 
(256 atoms can be encoded using 8 bits,
which is
to be compared with the 32 bits 
single floating point
precision). The bandwidth compression 
rate
depends on the layer, but is always above 12 in this experiment. Despite this compression,
we can still recover accuracies at the level of those obtained in Sec. \ref{algo:buff_para}.

\begin{figure}
\begin{center}
\includegraphics[scale=0.7]{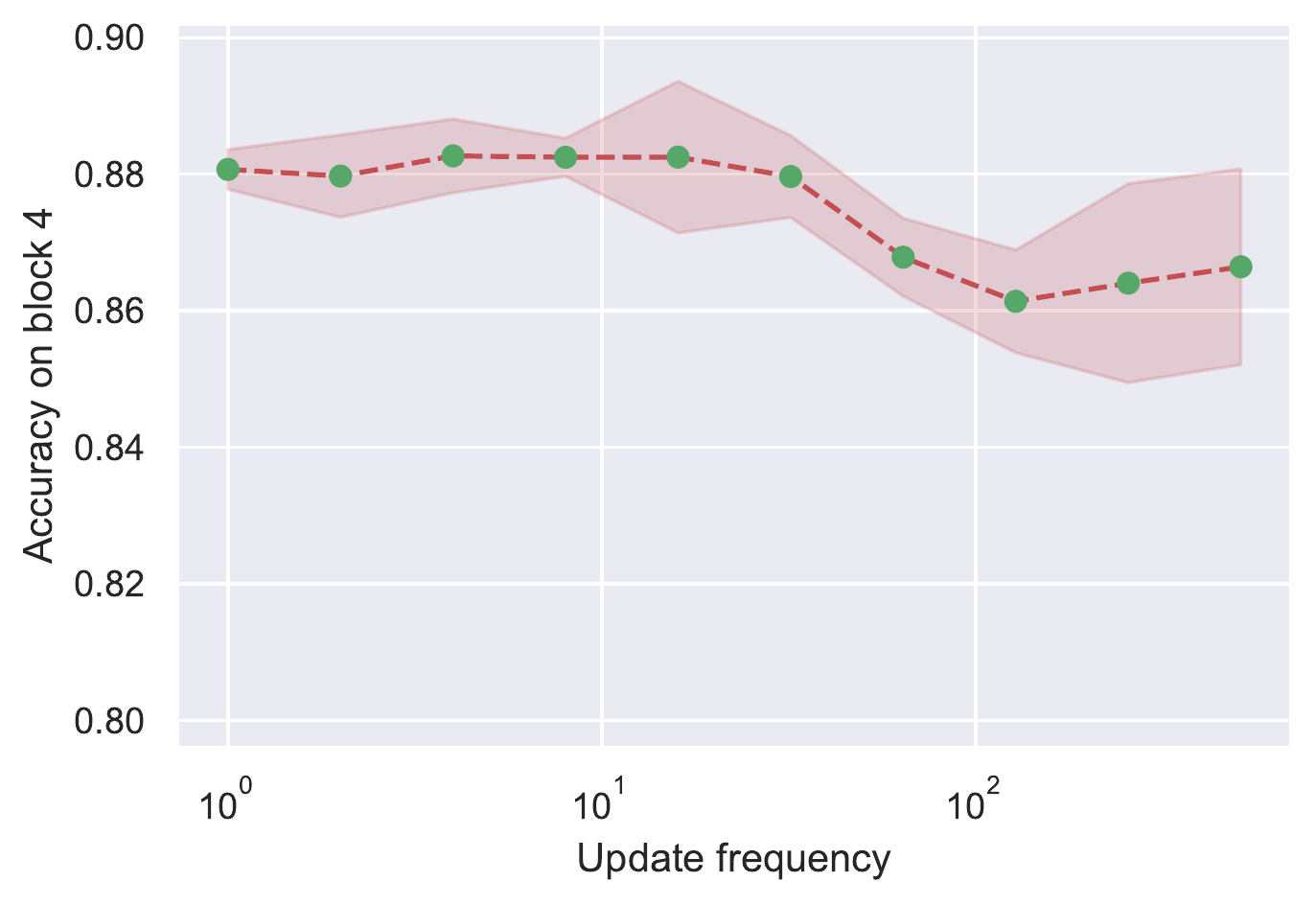}
\end{center}
\caption{Accuracy of the final block as a function of codebook update frequency. Updating the atoms only one out of 16 iterations still preserves the final accuracy with almost no degradation, while leading to substantial bandwidth saving. In fact, updating the codebook only once every 400 iterations leads to a drop of only $2\%$. This is surprising and indicates that the codebook can be largely ignored when measuring the bandwidth needed for this method
}
\label{fig:upd_freq}
\end{figure} 

{\paragraph{Update Frequency} The codebook size
has to be 
added to
the bandwidth use, because a step of synchronization is required as the distribution of the activations is changing with time. Our hypothesis is that the distribution of the features will 
evolve slowly,
making it unnecessary to sync codebooks at every iteration.
We choose an update frequency $\alpha\leq 1$,
(c.f. Sec. \ref{subsec:qt})} 
and update the codebook only at a fraction $\alpha$ of the iterations.
This
makes the communications of the dictionary 
negligible
and the procedure
may only have minor impact on
the final accuracy of our model. We thus propose to study this effect: we only update the weights of the codebook every $\frac{1}{\alpha}>1$ iterations.   Fig. \ref{fig:upd_freq} shows that the accuracy of our model is extremely stable
even at low update frequency.
We note that updating the codebook every 10 iterations preserves the accuracy of the final layer. Then the accuracy drops only by $2\%$ even if the dictionaries of the quantization modules are updated only once per epoch. {The final point on our 
logarithmic
scale corresponds to a setting where the codebook is not updated
at all: 
in this particular case, the codebook is randomly initialized, and the network 
adapts
to it. This shows the strength of our method.}

\begin{figure}[ht]
  \subfigure[Communication lag between layer 1-2]{
	\begin{minipage}[c][1\width]{
	   0.3\textwidth}
	   \centering
	   \includegraphics[width=1\textwidth]{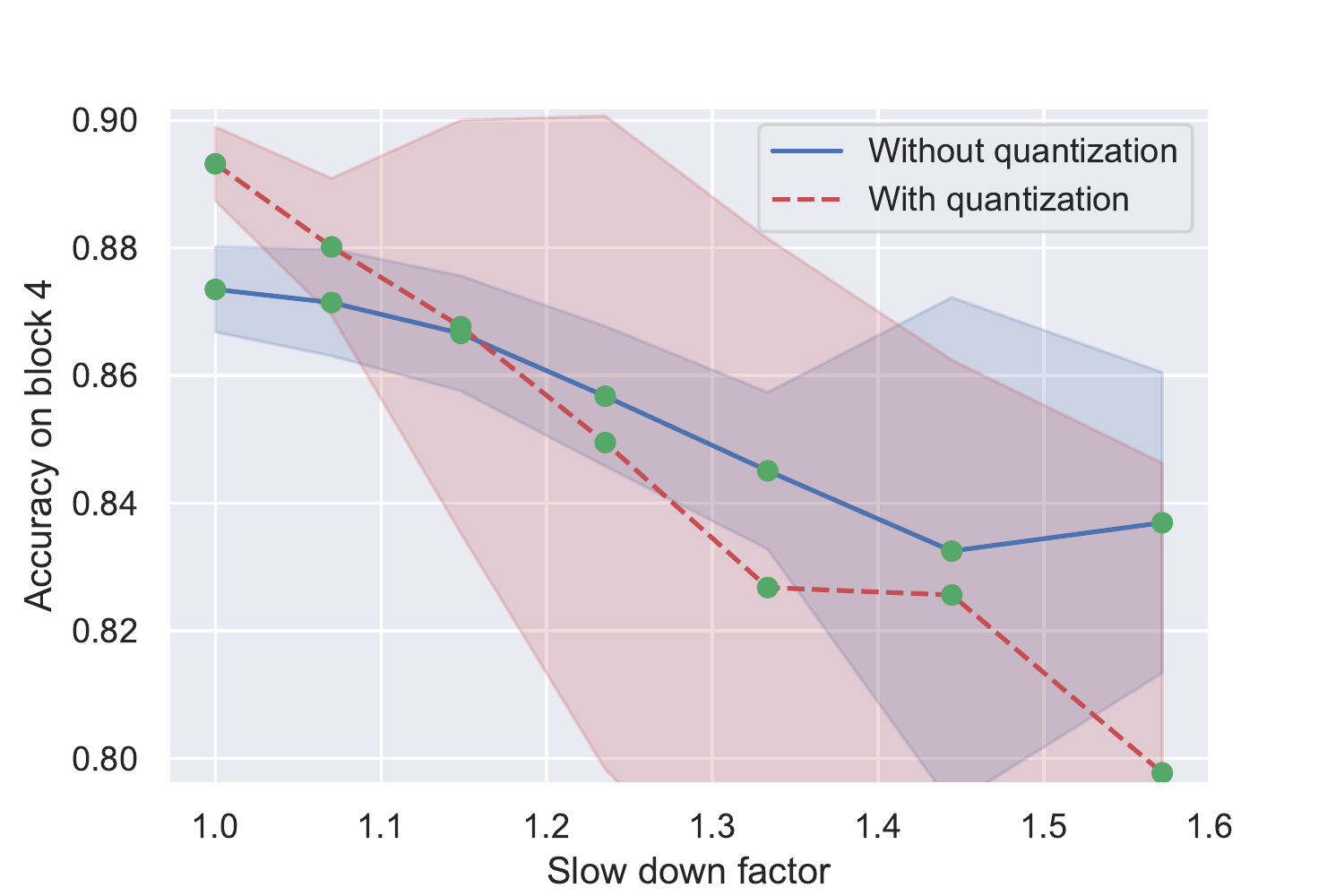}
	\end{minipage}}
 \hfill 	
  \subfigure[Communication lag between layer 2-3]{
	\begin{minipage}[c][1\width]{
	   0.3\textwidth}
	   \centering
	   \includegraphics[width=1\textwidth]{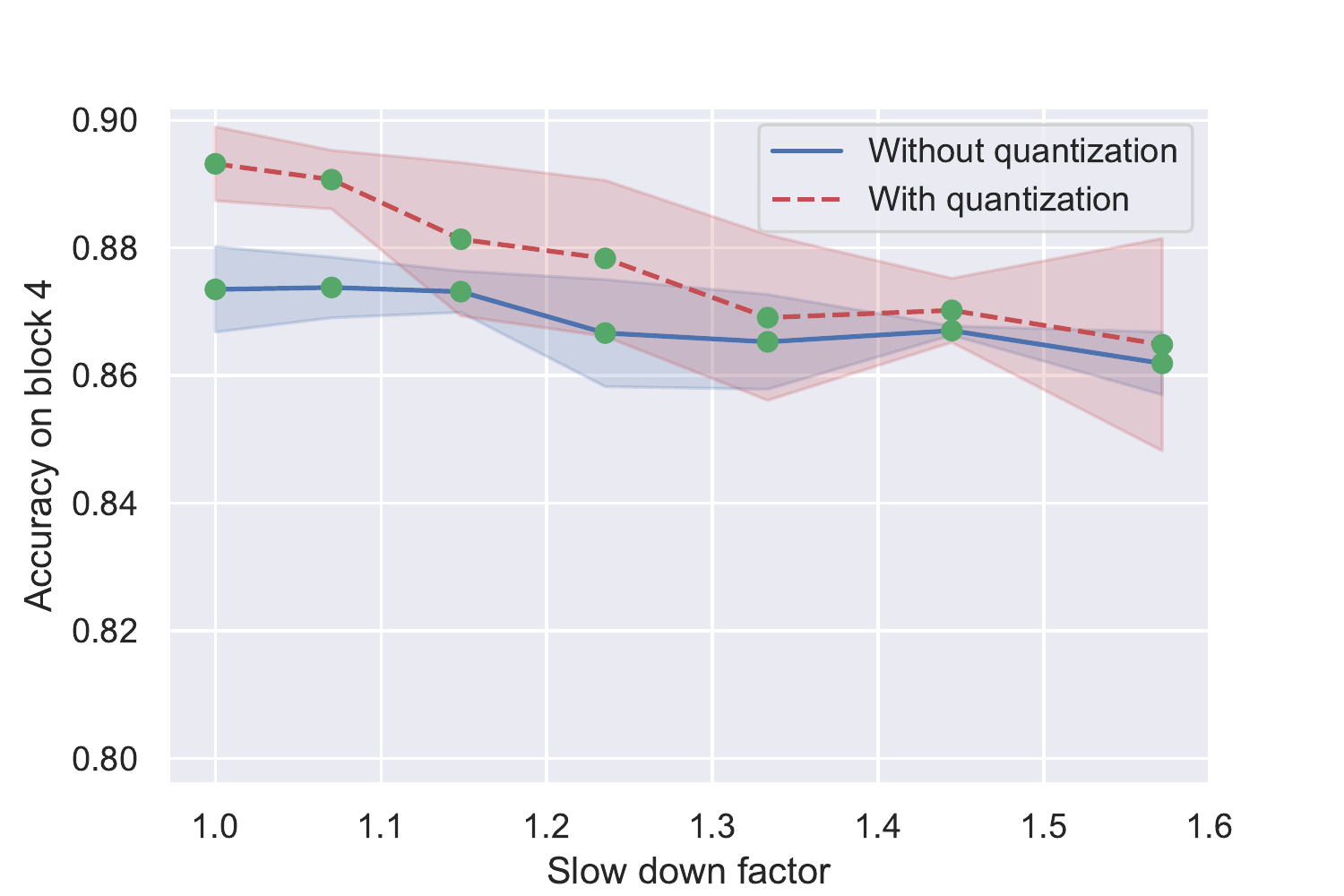}
	\end{minipage}}
 \hfill	
  \subfigure[Communication lag between layer 3-4]{
	\begin{minipage}[c][1\width]{
	   0.3\textwidth}
	   \centering
	   \includegraphics[width=1\textwidth]{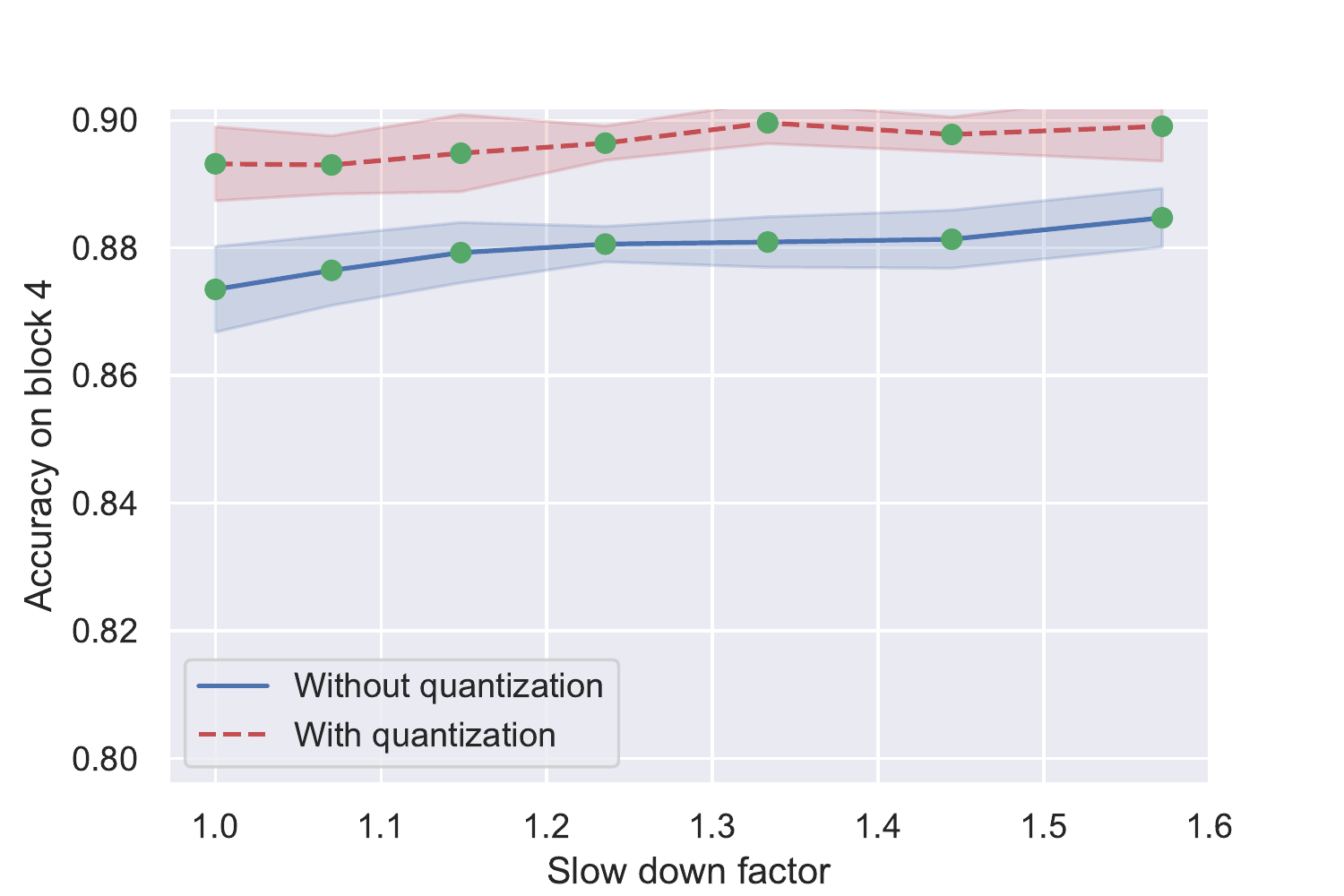}
	\end{minipage}}
\caption{We compare the test accuracy obtained for our Asynchronous method when we slow down a layer at various depths, with and without a quantization module at a \textit{fixed memory budget}. We used the hyper-parameters obtained from the Sec. \ref{sec:xp_qt} and we adjusted the size of the codebook such that the buffer memory used both with and without a quantization step is the same. Interestingly, in every settings, the algorithm which uses quantization outperforms the non-quantized version. Note, if the layer is  very out of sync, there is a substantial degradation of the accuracy. Note that slowing down the final layer as in (c) slightly boost the accuracy because this layer is fed with converged features during its last steps of training.
}
\label{fig:qt_noise2}
\end{figure}
{\paragraph{
Tradeoff between compression and buffer size at fixed memory budget
} In this 
experiment, we design a setting where each node has the same amount of limited total memory to store both the input and output 
dictionaries and the the buffer. We then compare at fixed budget the difference between quantized and standard approaches in terms of accuracy. For the non-quantized version, the buffer size is set at 128. For the quantized version, we increase the buffer size until its memory use matches the non-quantized versions. Overall the quantized version can store  $[1984~, ~3277~, ~3277~,~ 2726]$ samples at layer 1 to 4 respectively. The accuracy is compared between these in Figure \ref{fig:qt_noise2}. We observe the same trend in terms of impact of delays on the accuracy as in Fig.~\ref{fig:qt_noise}. More importantly we observe for a similar Buffer memory consumption that the quantized version reaches better performances than the non-quantized one in almost all cases, which implies that our quantization modules help improve the accuracy  at a fixed memory budget. In addition, we also have a (highly) favorable bandwidth compression factor (above 15). As a consequence, for the same Buffer Memory budget, and a weaker bandwidth, the quantized version delivers better performances with and without 
slow-downs
(almost everywhere).}




\section{Related work}\label{sec:rel}

To the best of our knowledge \citep{jaderberg2016decoupled} is the the first to directly consider the update or forward locking problems in deep feed-forward networks. Other works \citep{Huo2018,ddg} study the backward locking problem. Furthermore, a number of backpropagation alternatives \citep{choromanska2018beyond,dtargetProp,direct_feedback} 
can address backward locking. However, update locking is a more severe inefficiency. Consider the case where each layer's forward processing time is $T_F$ and is equal across a network of $L$ layers.  
Given that the backward pass is a constant multiple in time of the forward, in the most ideal case the backward unlocking will still only scale as $\mathcal{O}(LT_F)$ with $L$ parallel nodes, while update unlocking could scale as $\mathcal{O}(T_F)$.

One class of alternatives to standard back-propagation aims to avoid its biologically implausible aspects, most notably the weight transport problem \citep{bartunov2018assessing,direct_feedback,feedback_alignment,dtargetProp,ororbia2018conducting,ororbia2019biologically}. Some of these methods \citep{dtargetProp,direct_feedback} can also achieve backward unlocking as they permit all parameters to be updated at the same time, but only once the signal has propagated to the top layer. 
However, they do not solve the update or forward locking problems. Target propagation uses a local auxiliary network as in our approach, for propagating backward optimal activations computed from the layer above.  Feedback alignment replaces the symmetric weights of the backward pass with random weights. Direct feedback alignment extends the idea of feedback alignment passing errors from the top to all layers, potentially enabling simultaneous updates. These approaches have also not been shown to scale to large datasets \citep{bartunov2018assessing}, obtaining only $17.5\%$ top-5 accuracy on ImageNet (reference model achieving $59.8\%$). On the other hand, greedy learning  has been shown to work well on this task \citep{shallow}. We also note concurrent work in the context of biologically plausible models by \citep{nokland2019training} which improves on results from \cite{mostafa2018deep}, showing an approach similar to a specific instantiation of the synchronous version of DGL. This work however does not consider the applications to unlocking nor asynchronous training and cannot currently scale to ImageNet.

Another line of  related work inspired by optimization methods such as \emph{Alternating Direction Method of Multipliers (ADMM)} \citep{admmNet, carreira2014distributed, choromanska2018beyond}  use auxiliary variables to break the optimization into sub-problems. These approaches are fundamentally different from ours as they optimize for the joint training objective, the auxiliary variables providing a link between a layer and its successive layers, whereas we consider a different objective where a layer 
has no dependence on its successors. None of these methods can achieve update or forward unlocking. However, some \citep{choromanska2018beyond} are able to have simultaneous weight updates (backward unlocked). Another issue with ADMM methods is that most of the existing approaches except for \citep{choromanska2018beyond} require standard (``batch'') gradient descent and are thus difficult to scale.
They also often involve an inner minimization problem and have thus not been demonstrated to work on large-scale datasets. Furthermore, none of these have been combined with CNNs.   

Distributed optimization based on data parallelism is a popular area  in machine learning beyond deep learning models and often studied in the convex setting \citep{JMLR:v19:17-650}. For deep network optimization the predominant method
is distributed synchronous SGD \citep{im1hr} and variants, as well as asynchronous  \citep{elasticSGD} variants. Our work is closer to a form of model parallelism rather than data parallelism, and can be easily combined with many data parallel methods (e.g. distributed synchronous SGD). Recently federated learning, where the dataset used on each node is different, has become a popular line of research \cite{konevcny2015federated}. Federated learning is orthogonal to our proposal, but the idea of facilitating different data on each node could potentially be incorporated in our framework. One direction for doing this would consider a class of models where  individual layers can provide both input and output to other layers as done in \cite{huang2016deep}.

Recent proposals for ``pipelining'' \citep{huang2018gpipe} consider systems level approaches to optimize latency times. These methods do not address the update, forward, locking problems\citep{jaderberg2016decoupled} which are algorithmic constraints of the learning objective and backpropagation. Pipelining can be seen as a  attempting to work around these restrictions, with the fundamental limitations remaining. Removing and reducing update, backward, forward locking would simplify the design and efficiency of such systems-level machinery. Tangential to our work \citet{lee2015deeply} considers auxiliary objectives but with a joint learning objective, which is not capable of addressing any of the problems considered in this work.  

We also note that since publication of \cite{belilovsky2020decoupled} several works have extended the methods and studied various things such as how it affects representations \cite{laskin2020parallel} to allow for improved scalability. However, these works largely consider the synchronous setting and have not extensively considered the asynchronous setting emphasized in Sec. 4.3 and 4.4. 

\section{Conclusion}

We have analyzed and introduced a simple and strong baseline for parallelizing per layer and per module computations in CNN training. This work matches or exceeds state-of-the-art approaches addressing these problems and is able to scale to much larger datasets than others. In several realistic settings, for the same bytes range, an asynchronous framework with quantized buffers can be more robust to delays than a non-quantized version while providing better bandwidth. Future work can develop improved auxiliary problem objectives and combinations with delayed feedback.

\subsection*{Acknowledgements}
LL was  supported by ANR-20-CHIA-0022-01 "VISA-DEEP". EO acknowledges NVIDIA for its GPU donation. EB acknowledges funding from IVADO. This work was granted access to the HPC resources of IDRIS under the allocation 2020-[AD011011216R1] made by GENCI and it was partly supported by ANR-19-CHIA "SCAI".

\bibliography{bib_greedy}
\setcounter{theorem}{0}
\newpage
\appendix
\onecolumn

\section{Proofs}
\label{sec:app_proofs}

\setcounter{section}{4}

\begin{lemma}Under Assumption 3 and 4, one has: $\forall \Theta_j, \mathbb{E}_{p^*_{j-1}}\big[\Vert\nabla_{\Theta_j}\ell(Z_{j-1};\Theta_j)\Vert^2\big]\leq G$.\end{lemma}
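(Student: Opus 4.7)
The plan is to transfer the uniform-in-$t$ bound on $\mathbb{E}_{p^t_{j-1}}[\|\nabla_{\Theta_j}\ell\|^2]$ from Assumption 3 to the limiting distribution $p^*_{j-1}$ using the convergence supplied by Assumption 4, via Fatou's lemma. The key observation is that $\sum_t c^t_{j-1} < \infty$ forces $c^t_{j-1} \to 0$, which is exactly $L^1$-convergence of the densities $p^t_{j-1} \to p^*_{j-1}$. A direct bound on $|\int f(p^* - p^t)\,dz|$ in terms of $c^t$ would require $f := \|\nabla_{\Theta_j}\ell(\cdot\,;\Theta_j)\|^2$ to be $L^\infty$, which Assumption 3 does not give us, so an inequality-preserving limit argument is the cleaner route.

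Concretely, I would first fix an arbitrary $\Theta_j$ and set $f(z) = \|\nabla_{\Theta_j}\ell(z;\Theta_j)\|^2 \geq 0$. From $L^1$-convergence of the densities, extract a subsequence $\{t_k\}$ along which $p^{t_k}_{j-1}(z) \to p^*_{j-1}(z)$ for almost every $z$ (a standard consequence of $L^1$ convergence). Since $f \geq 0$ and $p^{t_k}_{j-1} \geq 0$, Fatou's lemma gives
\begin{equation*}
\int f(z)\,p^*_{j-1}(z)\,dz \;=\; \int f(z)\,\liminf_{k\to\infty} p^{t_k}_{j-1}(z)\,dz \;\leq\; \liminf_{k\to\infty}\int f(z)\,p^{t_k}_{j-1}(z)\,dz.
\end{equation*}
By Assumption 3, each integral on the right is at most $G$, hence the $\liminf$ is at most $G$, yielding $\mathbb{E}_{p^*_{j-1}}[\|\nabla_{\Theta_j}\ell(Z_{j-1};\Theta_j)\|^2] \leq G$. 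Since $\Theta_j$ was arbitrary, this is the claim.

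The only subtle point is the passage to the limit without an $L^\infty$ control on $f$. Fatou handles this cleanly because $f$ is nonnegative; uniform integrability arguments or dominated convergence would be heavier and unnecessary here. This is the step I would highlight as the main conceptual content of the lemma, with the rest being bookkeeping about the definition of the total variation distance and extraction of an a.e.\ convergent subsequence from $L^1$-convergence.
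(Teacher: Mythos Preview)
Your proof is correct and follows essentially the same route as the paper: both reduce to Fatou's lemma applied to the nonnegative integrand $\|\nabla_{\Theta_j}\ell(z;\Theta_j)\|^2\,p^{t}_{j-1}(z)$ after establishing pointwise a.e.\ convergence of the densities. The only cosmetic difference is that the paper uses the full summability $\sum_t c^t_{j-1}<\infty$ together with Tonelli to get $p^t_{j-1}(z)\to p^*_{j-1}(z)$ a.e.\ along the entire sequence, whereas you extract an a.e.-convergent subsequence from $L^1$-convergence; either suffices for the Fatou step.
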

\begin{proof}
First of all, observe that under Assumption 4 and via Fubini's theorem:
\begin{equation}
    \sum_t c_{j-1}^t = \sum_t \int |p_{j-1}^t(z)-p_{j-1}^*(z)|\,dz=\int\sum_t |p_{j-1}^t(z)-p_{j-1}^*(z)|\,dz <\infty
\end{equation}
thus, $\sum_t |p_j^t-p_j^*|$ is convergent a.s. and $|p_j^t-p_j^*|\rightarrow 0$ a.s as well. From Fatou's lemma, one has:

\begin{align}
    \int p^*_{j-1}(z)\Vert\nabla_{\Theta_j} \ell(z;\Theta_j)\Vert^2 \,dz&= \int \lim\inf_t p^t_{j-1}(z)\Vert\nabla_{\Theta_j} \ell(z;\Theta_j)\Vert^2 \,dz\\ 
    &\leq \lim\inf_t \int  p^t_{j-1}(z)\Vert\nabla_{\Theta_j} \ell(z;\Theta_j)\Vert^2 \,dz\leq G
\end{align}

then, observe that this implies that:
\begin{equation}
    p_j^t(z)\leq p_j^*(z)+(p_j^t(z)-p_j^*(z)) \leq p_j^*(z)+ |p_j^t(z)-p_j^*(z)|\leq p_j^*(z)+\sum_t |p_j^t(z)-p_j^*(z)|
\end{equation}
thus, $\sup_t p_j^t$ is integrable because the right term is integrable as well. Then, observe that:

\[ \Vert \nabla\ell_{j,t} \Vert |p^*_j(z)-p^t_j(z)| =  \Vert \nabla\ell_{j,t} \Vert 1_{p^*_j(z)< p^t_j(z)}|p^*_j(z)-p^t_j(z)|+ \Vert \nabla\ell_{j,t} \Vert 1_{p^*_j(z)\geq  p^t_j(z)}|p^*_j(z)-p^t_j(z)|\]

Then, the left term is bounded because:

\begin{equation}
    \Vert \nabla\ell_{j,t} \Vert 1_{p^*_j(z)< p^t_j(z)}|p^*_j(z)-p^t_j(z)|\leq \Vert \nabla\ell_{j,t} \Vert 1_{p^*_j(z)< p^t_j(z)}p^t_j(z) \leq \Vert \nabla\ell_{j,t} \Vert 1_{p^*_j(z)< p^t_j(z)}\sup_t p^t_j(z)
\end{equation}

\[\int \sum_t   1_{p^*_j(z)\geq  p^t_j(z)}(p^*_j(z)-p^t_j(z))\,dz\leq \int \sum_t   1_{p^*_j(z)=  p^t_j(z)}|p^*_j(z)-p^t_j(z)|\,dz<\infty\]

In particular:
\[\int \sum_t   1_{p^*_j(z)\geq  p^t_j(z)}p^*_j(z)\,dz<\infty\]

It implies that $\sum_t   1_{p^*_j(z)\geq  p^t_j(z)}$ is almost surely finite, and, a.s. $\forall z,\exists t_0, p^*_j(z)\leq p^{t_0}_j(z)$. In particular this implies that a.s.:
\[\forall z, p^*_j(z)\leq \sup_t p_j^t(z)\]

\end{proof}

\begin{lemma}Under Assumptions 1, 3 and 4, we have:
\begin{align*}
\mathbb{E}[\mathcal{L}(\Theta_j^{t+1})]\leq\mathbb{E}[\mathcal{L}(\Theta_j^{t})]-\eta_t\big(\mathbb{E}[\Vert\nabla\mathcal{L}(\Theta_j^t)\Vert^2]-\sqrt{2}Gc^t_{j-1}\big)+\frac{LG}{2}\eta_t^2\,,
\end{align*}
Observe that the expectation is taken over each random variable.
\end{lemma}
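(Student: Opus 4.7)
The plan is to derive the descent-type inequality by starting from the $L$-smoothness descent lemma, substituting the SGD update, and then carefully handling the fact that the stochastic gradient is evaluated on a sample drawn from the instantaneous distribution $p_{j-1}^t$ while the risk $\mathcal{L}$ is defined with respect to the limiting distribution $p_{j-1}^*$. Concretely, from Assumption~1 I would write
\begin{equation*}
\mathcal{L}(\Theta_j^{t+1}) \;\leq\; \mathcal{L}(\Theta_j^{t}) + \nabla \mathcal{L}(\Theta_j^t)^\top(\Theta_j^{t+1}-\Theta_j^t) + \tfrac{L}{2}\|\Theta_j^{t+1}-\Theta_j^t\|^2,
\end{equation*}
plug in the SGD update $\Theta_j^{t+1}-\Theta_j^t = -\eta_t \nabla_{\Theta_j}\ell(Z_{j-1}^t;\Theta_j^t)$ with $Z_{j-1}^t \sim p_{j-1}^t$, and take the conditional expectation given $\Theta_j^t$. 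The quadratic term is then bounded by $\tfrac{LG}{2}\eta_t^2$ directly via Assumption~3.

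The first order term becomes $-\eta_t \nabla\mathcal{L}(\Theta_j^t)^\top g^t(\Theta_j^t)$, where $g^t(\Theta)\triangleq \mathbb{E}_{p_{j-1}^t}[\nabla_{\Theta_j}\ell(Z_{j-1};\Theta)]$ and $\nabla\mathcal{L}(\Theta_j^t)=g^*(\Theta_j^t)\triangleq \mathbb{E}_{p_{j-1}^*}[\nabla_{\Theta_j}\ell(Z_{j-1};\Theta_j^t)]$. I would add and subtract $g^*(\Theta_j^t)$ to split it into $-\eta_t\|\nabla\mathcal{L}(\Theta_j^t)\|^2 + \eta_t \nabla\mathcal{L}(\Theta_j^t)^\top(g^*(\Theta_j^t)-g^t(\Theta_j^t))$. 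The Cauchy--Schwarz inequality in $\mathbb{R}^d$ then yields
\begin{equation*}
\bigl|\nabla\mathcal{L}(\Theta_j^t)^\top(g^*(\Theta_j^t)-g^t(\Theta_j^t))\bigr| \;\leq\; \|\nabla\mathcal{L}(\Theta_j^t)\|\;\|g^*(\Theta_j^t)-g^t(\Theta_j^t)\|.
\end{equation*}

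The crux is bounding the two factors by quantities the lemma allows. For the first factor, Jensen's inequality combined with Lemma~1 (which precisely says that $\mathbb{E}_{p_{j-1}^*}[\|\nabla_{\Theta_j}\ell\|^2]\leq G$) gives $\|\nabla\mathcal{L}(\Theta_j^t)\|\leq \sqrt{G}$. For the second factor, I would write
\begin{equation*}
\|g^*(\Theta_j^t)-g^t(\Theta_j^t)\| \;\leq\; \int |p_{j-1}^*(z)-p_{j-1}^t(z)|\,\|\nabla_{\Theta_j}\ell(z;\Theta_j^t)\|\,dz
\end{equation*}
and apply Cauchy--Schwarz a second time, factoring the integrand as $\sqrt{|p^*-p^t|}\cdot\sqrt{|p^*-p^t|}\,\|\nabla\ell\|$. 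This produces $\sqrt{c_{j-1}^t}\cdot\sqrt{\int|p^*-p^t|\,\|\nabla\ell\|^2 dz}$, and the last integral is bounded by $\int (p^*+p^t)\|\nabla\ell\|^2 dz \leq 2G$ using Assumption~3 for $p^t$ and Lemma~1 for $p^*$. Hence $\|g^*-g^t\|\leq \sqrt{2G c_{j-1}^t}$ and the cross term is bounded by $G\sqrt{2 c_{j-1}^t}$, matching the main-text statement.

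Combining everything and taking outer expectation over $\Theta_j^t$ yields the claim. The technical obstacle here is really the double use of Cauchy--Schwarz on the total variation integral: the natural bound $\int|p^*-p^t|\,\|\nabla\ell\|dz$ is not directly controlled because Assumption~3 bounds $\|\nabla\ell\|^2$ in expectation, not pointwise. Splitting the density difference into two square roots and invoking both Assumption~3 and Lemma~1 to bound the weighted second moment under $p^*+p^t$ is what couples the total variation cleanly to $\sqrt{c_{j-1}^t}$ with the right constant.
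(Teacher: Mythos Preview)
Your proposal is correct and follows essentially the same route as the paper's proof: the $L$-smoothness descent lemma, Assumption~3 for the quadratic term, the Cauchy--Schwarz split of $|p_{j-1}^t-p_{j-1}^*|$ into two square roots to extract $\sqrt{c_{j-1}^t}$, the bound $\int|p^t-p^*|\,\|\nabla\ell\|^2\,dz\le 2G$ via Assumption~3 and Lemma~1, and Jensen together with Lemma~1 to control $\|\nabla\mathcal{L}(\Theta_j^t)\|\le\sqrt{G}$. The resulting constant $G\sqrt{2c_{j-1}^t}$ you obtain agrees with the paper's derivation and its main-text statement (the $\sqrt{2}Gc_{j-1}^t$ appearing in the appendix statement is a typo).
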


\begin{proof}
By $L$-smoothness:
\begin{align}
\mathcal{L}(\Theta_j^{t+1})\leq\mathcal{L}(\Theta_j^t)&+\nabla\mathcal{L}(\Theta_j^t)^{T}(\Theta^{t+1}_{j}-\Theta_{j}^t)+\frac{L}{2}\Vert\Theta_{j}^{t+1}-\Theta_{j}^t\Vert^{2}
\end{align}
Substituting $\Theta_{j}^{t+1}-\Theta_{j}^{t}$ on the right:
\begin{align}
\mathcal{L}(\Theta_j^{t+1})&\leq\mathcal{L}(\Theta_j^t)-\eta_t\nabla\mathcal{L}(\Theta_j^t)^{T} \nabla_{\Theta_j} \ell(Z_{j-1}^t;\Theta_j^t)+\frac{L\eta_t^2}{2}\Vert\nabla_{\Theta_j} \ell(Z_{j-1}^t;\Theta_j^t)\Vert^{2}
\end{align}

Taking the expectation w.r.t. $Z_{j-1}^t$ which has a density $p_{j-1}^t$, we get:
\begin{align*}
\mathbb{E}_{p^t_{j-1}}[\mathcal{L}(\Theta_j^{t+1})]&\leq\mathcal{L}(\Theta_j^t)-\eta_t\nabla\mathcal{L}(\Theta_j^t)^{T}\mathbb{E}_{p_{j-1}^t}[\nabla_{\Theta_j} \ell(Z_{j-1}^t;\Theta_j^t)]
+\frac{L\eta_t^2}{2}\mathbb{E}_{p_{j-1}^t}\big[\Vert\nabla_{\Theta_j} \ell(Z_{j-1}^t;\Theta_j^t)\Vert^{2}\big]
\end{align*}

From Assumption 3, we obtain that:
\begin{equation}
    \frac{L\eta_t^2}{2}\mathbb{E}_{p_{j-1}^t}\big[\Vert \nabla_{\Theta_j} \ell(Z_{j-1}^t;\Theta_j^t)\Vert^{2}\big]\leq \frac{L\eta_t^2G}{2}
\end{equation}

Then, as a side computation, observe that:

\begin{align}
\Vert \mathbb{E}_{p_{j-1}^t}\big[\nabla_{\Theta_j} \ell(Z_{j-1}^t;\Theta_j^t)\big]-\nabla\mathcal{L}(\Theta_j^t)\Vert&=\Vert \int \nabla \ell(z,\Theta_j^t)p^t_{j-1}(z)\,d z -\int \nabla \ell(z,\Theta_j^t)p^*_{j-1}(z)\,d z\Vert\\
&\leq   \int \Vert\nabla \ell(z,\Theta_j^t)\Vert ~ |p^t_{j-1}(z)-p^*_{j-1}(z)|\,d z\\
&=  \int \big(\Vert\nabla \ell(z,\Theta_j^t)\Vert\sqrt{|p^t_{j-1}(z)-p^*_{j-1}(z)|}\big) ~ \sqrt{|p^t_{j-1}(z)-p^*_{j-1}(z)|}\,d z\\
\end{align}
Let us apply the Cauchy-Swchartz inequality, we obtain:

\begin{align}
\Vert \mathbb{E}_{p_{j-1}^t}\big[\nabla_{\Theta_j} \ell(Z_{j-1}^t;\Theta_j^t)\big]-\nabla\mathcal{L}(\Theta_j^t)\Vert&\leq \sqrt{\int \Vert\nabla \ell(z,\Theta_j^t)\Vert^2 |p^t_{j-1}(z)-p^*_{j-1}(z)|\,d z} \sqrt{\int |p^t_{j-1}(z)-p^*_{j-1}(z)|\,d z }\\
&= \sqrt{\int \Vert\nabla \ell(z,\Theta_j^t)\Vert^2 |p^t_{j-1}(z)-p^*_{j-1}(z)|\,d z} \sqrt{c_{j-1}^t}
\end{align}

Then, observe that:

\begin{align}
\int \Vert\nabla \ell(z,\Theta_j^t)\Vert^2 |p^t_{j-1}(z)-p^*_{j-1}(z)|\,d z &\leq  \int \Vert\nabla \ell(z,\Theta_j^t)\Vert^2 \big(p^t_{j-1}(z)+p^*_{j-1}(z)\big)\,d z\\
&=\mathbb{E}_{p^t_{j-1}}[ \Vert\nabla \ell(Z_{j-1},\Theta_j^t)\Vert^2 ]+\mathbb{E}_{p^*_{j-1}}[ \Vert\nabla \ell(Z_{j-1},\Theta_j^t)\Vert^2 ]\\
&\leq 2G
\end{align}

The last inequality follows from Lemma 4.1 and Assumption 3.

Then, using again Cauchy-Schwartz inequality:
\begin{align}
\bigg|\Vert\nabla\mathcal{L}(\Theta_j^t)\Vert^2- \nabla\mathcal{L}(\Theta_j^t)^{T}\mathbb{E}_{p_{j-1}^t}[\nabla_{\Theta_j} \ell(Z_{j-1}^t;\Theta_j^t)]\bigg|
&=\bigg|\nabla\mathcal{L}(\Theta_j^t)^T\big(\nabla\mathcal{L}(\Theta_j^t)-\mathbb{E}_{p_{j-1}^t}[\nabla_{\Theta_j} \ell(Z_{j-1}^t;\Theta_j^t)]\big)\bigg|\\
&\leq \Vert \nabla \mathcal{L}(\Theta^t_j)\Vert~\Vert \mathbb{E}_{p_{j-1}^t}\big[\nabla_{\Theta_j} \ell(Z_{j-1}^t;\Theta_j^t)\big]-\nabla\mathcal{L}(\Theta_j^t)\Vert \\
&\leq \Vert \nabla \mathcal{L}(\Theta^t_j)\Vert \sqrt{2Gc^t_{j-1}}
\end{align}

Then, taking the expectation leads to

\begin{align}
\big|\mathbb{E}\bigg[\Vert\nabla\mathcal{L}(\Theta_j^t)\Vert^2- \nabla\mathcal{L}(\Theta_j^t)^{T}\mathbb{E}_{p_{j-1}^t}[\nabla_{\Theta_j} \ell(Z_{j-1}^t;\Theta_j^t)]\bigg]\big|&\leq \mathbb{E}[\bigg|\Vert\nabla\mathcal{L}(\Theta_j^t)\Vert^2- \nabla\mathcal{L}(\Theta_j^t)^{T}\mathbb{E}_{p_{j-1}^t}[\nabla_{\Theta_j} \ell(Z_{j-1}^t;\Theta_j^t)]\bigg|]\\
&\leq \mathbb{E}[\Vert\nabla \mathcal{L}(\Theta^t_j)\Vert] \sqrt{2Gc^t_{j-1}}\\
&\leq \sqrt{ \mathbb{E}[\Vert\nabla \mathcal{L}(\Theta^t_j)\Vert^2] }\sqrt{2Gc^t_{j-1}}\\
\end{align}
However, observe that by Lemma 4.1 and Jensen inequality:

\begin{align}
\Vert \nabla\mathcal{L}(\Theta_j^t)\Vert^2 =\Vert \mathbb{E}_{p^*_j}[\nabla_{\Theta_j} \ell(Z,\Theta_j^t)]\Vert^2\leq \mathbb{E}_{p^*_j}[ \Vert \nabla_{\Theta_j} \ell(Z,\Theta_j^t)\Vert^2]\leq G
\end{align}
Combining this inequality and Assumption 3, we get:
\begin{align*}
\mathbb{E}[\mathcal{L}(\Theta_j^{t+1})]\leq\mathbb{E}[\mathcal{L}(\Theta_j^{t})]-\eta_t\big(\mathbb{E}[\Vert\nabla\mathcal{L}(\Theta_j^t)\Vert^2]-\sqrt{2}Gc^t_{j-1}\big)+\frac{LG}{2}\eta_t^2\,,
\end{align*}

\end{proof}

\begin{proposition}
Under Assumptions 1, 2, 3 and 4, each term of the following equation converges:
\begin{align}
\hspace{-0.2cm}
\begin{split}
\sum_{t=0}^T \eta_t
\mathbb{E}[\Vert\nabla\mathcal{L}(\Theta_j^t)\Vert^2] \leq  \mathbb{E}[\mathcal{L}(\Theta^0_j)] +G\sum_{t=0}^T\eta_t(\sqrt{2c_{j-1}^t}+\frac{L\eta_t}{2})
\end{split}
\end{align}
\end{proposition}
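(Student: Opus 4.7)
The plan is to derive the statement as a direct consequence of Lemma \ref{lemma:main} by telescoping, and then verify termwise convergence on the right-hand side using Assumptions 2 and 4.

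First, I would rearrange the one-step inequality from Lemma \ref{lemma:main} to isolate the quantity of interest:
\begin{equation*}
\eta_t \mathbb{E}[\Vert\nabla\mathcal{L}(\Theta_j^t)\Vert^2] \leq \mathbb{E}[\mathcal{L}(\Theta_j^{t})] - \mathbb{E}[\mathcal{L}(\Theta_j^{t+1})] + \frac{LG}{2}\eta_t^2 + G\eta_t\sqrt{2c_{j-1}^t}.
\end{equation*}
Then I would sum this inequality from $t=0$ to $T$. The $\mathcal{L}$-terms telescope, leaving $\mathbb{E}[\mathcal{L}(\Theta_j^0)] - \mathbb{E}[\mathcal{L}(\Theta_j^{T+1})]$. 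Since the loss function $\ell$ is assumed non-negative, so is $\mathcal{L}$, and we may drop $-\mathbb{E}[\mathcal{L}(\Theta_j^{T+1})]$ to obtain exactly the claimed inequality.

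It then remains to argue that each term on the right-hand side converges as $T\to\infty$. The term $\mathbb{E}[\mathcal{L}(\Theta^0_j)]$ is a constant. The term $\sum_t \eta_t^2$ converges by Assumption 2 (Robbins--Monro). For $\sum_t \eta_t \sqrt{c_{j-1}^t}$, I would apply the Cauchy--Schwarz inequality:
\begin{equation*}
\sum_{t=0}^T \eta_t\sqrt{c_{j-1}^t} \leq \left(\sum_{t=0}^T \eta_t^2\right)^{1/2}\left(\sum_{t=0}^T c_{j-1}^t\right)^{1/2},
\end{equation*}
which is bounded by Assumptions 2 and 4. Hence the right-hand side has a finite limit, and since the left-hand side is monotone nondecreasing in $T$, it also converges.

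This proof is essentially routine once Lemma \ref{lemma:main} is in hand; I do not anticipate any serious obstacle. The only mildly delicate point is to remember to invoke the non-negativity of $\mathcal{L}$ when dropping the terminal term, and to apply Cauchy--Schwarz (rather than trying to bound $\sqrt{c_{j-1}^t}$ pointwise) in order to reuse both of the summability conditions provided by Assumptions 2 and 4 simultaneously.
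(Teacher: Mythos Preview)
Your proposal is correct and matches the paper's own proof essentially step for step: telescope Lemma~\ref{lemma:main}, drop the non-negative terminal term, and conclude convergence of the right-hand side from Assumption~2 together with Cauchy--Schwarz applied to $\sum_t \eta_t\sqrt{c_{j-1}^t}$ (the paper states this last step more tersely, but it is the same argument).
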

\begin{proof} Applying Lemma \ref{lemma:main} for $t=0,...,T-1$, we obtain (observe the telescoping sum), for our non-negative loss:
\begin{align}\sum_{t=0}^T \eta_t \mathbb{E}[\Vert\nabla\mathcal{L}(\Theta_j^t)\Vert^2]& \leq  \mathbb{E}[\mathcal{L}(\Theta^0_j)]-\mathbb{E}[\mathcal{L}(\Theta^{T+1}_j)] +\sqrt 2G\sum_{t=0}^T\sqrt{c_j^t}\eta_t +\frac{LG}{2}\sum_{t=0}^T \eta_t^2\\
&\leq \mathbb{E}[\mathcal{L}(\Theta^0_j)] +\sqrt 2G\sum_{t=0}^T\sqrt{c_j^t}\eta_t +\frac{LG}{2}\sum_{t=0}^T \eta_t^2\\
\end{align}

Yet, $\sum \sqrt{c_j^t}\eta_t$ is convergent, as $\sum c_j^t$  and $\sum_t \eta_t^2$ are convergent, thus the right term is bounded.
\end{proof}


\setcounter{section}{1}
\section{Additional Descriptions of Experiments}
Here we provide some additional details of the experiments. Code for experiments is provided along with the supplementary materials.
\paragraph{Comparisons to DNI} The comparison to DNI attempts to directly replicate  the Appendix C.1 \citep{jaderberg2016decoupled}. Although the baseline accuracies for backprop and cDNI are close to those reported in the original work, those of DNI are worse than those reported in \citep{jaderberg2016decoupled}, which could be due to minor differences in the implementation. We utilize a popular pytorch DNI implementation available and source code will be provided.

\paragraph{Auxiliary network study} We use SGD with momentum of $0.9$ and weight decay $5\times 10^{-4}$  \citep{zagoruyko2016wide} and a short schedule of 50 epochs and decay factor of $0.2$ every 15 epochs \citep{shallow}. 
\paragraph{Sequential vs Greedy optimization experiments} We use the same architecture and optimization as in the Auxiliary network study

\paragraph{Imagenet} We use the shortened optimization schedule prescribed in \citep{Xiao2019}. It consists of training for 50 epochs with mini-batch size $256$, uses SGD with momentum of 0.9,  weight decay of $10^{-4}$, and a learning rate of $0.1$ reduced by a factor 10 every 10 epochs.

\section{Detailed Discussion of Relative Speed of Competing Methods}\label{appendix:speed}
Here we describe in more detail the elements governing differences between methods such as DNI\citep{jaderberg2016decoupled}, DDG/FA\citep{Huo2018}, and the simpler DGL. We will argue that if we take the assumption that each approach runs for the same number of epochs or iterations and applies the same splits of the network then DGL is by construction faster than the other methods which rely on feedback.   
The relative speeds of these methods are governed by the following:
\begin{enumerate}
    \item  Computation besides forward and backward passes on primary network modules (e.g. auxiliary networks forward and backward passes) 
    \item Communication time of sending activations from one module to the next module
    \item Communication time of sending feedback to the previous module 
    \item Waiting time for signal to reach final module
\end{enumerate}

As discussed in the text our auxiliary modules which govern (1) for DGL are negligible thus the overhead of (1) is negligible. DNI will inherently have large auxiliary models as it must predict gradients, thus (1) will be much greater than in DGL. (2) should be of equal speed across all methods given the same implementation and hardware. (3) does not exist for the case of DGL but exists for all other cases.  
(4) applies only in the case of backward unlocking methods (DDG/FA) and does not exist for DNI or DGL as they are update unlocked.   

Thus we observe that DGL by construction is faster than the other methods. We note that for use cases in multi-GPU settings communication would need to be well optimized for use of any of these methods. Although we include a parallel implementation based on the software from \citep{ddg}, an optimized distributed implementations of the ideas presented here and related works is outside of the scope of this work.  

\subsection{Auxiliary Network Sizes and FLOP comparisons on ImageNet}
\begin{table}[]
    \centering
    \begin{tabular}{c|c|c}
       & Flops Net & Flops Aux\\\hline
     VGG-13 ($K=4$) &13 GFLOPs & 0.2  GFLOP\\\hline
     VGG-19 ($K=4$) &20 GFLOPs & 0.2 GFLOP\\\hline
     ResNet-152 ($K=2$)  &   11 GFLOP &  0.02 GFLOP\\\hline
    \end{tabular}

    \caption{ImageNet comparisons of FLOPs for auxiliary model in major models trained. Auxiliary networks are negligible.}
    \label{tab:flop_im}
\end{table}
We briefly illustrate the sizes of auxiliary networks. Lets take as an example the ImageNet experiments for VGG-13. At the first layer the output is $224\times224\times64$. The MLP-aux here would be applied after averaging to $2\times2\times 64$, and would consists of 3 fully connected layers of size $256$ ($2*2*64$) followed by a projection to $1000$ image categories. The MLP-SR-aux network used would first reduce to $56\times56\times64$ and then apply 3 layers of $1\times1$ convolutions of width 64. This is followed by reduction to $2\times2$ and 3 FC layers as in the MLP-aux network. As mentioned in Sec. ~\ref{sec:imagenet} the auxiliary networks are neglibile in size. We further illustrate this in \ref{tab:flop_im}.

\section{Additional pseudo-code}\label{appendix:pseudo}
To illustrate the parallel implementations of the Algorithms we show a different pseudocode implementation with an explicit behavior for each worker specified. The following Algorithm~\ref{_algo:basic_parallel} is equivalent to Algorithm~\ref{algo:basic} in terms of output but directly illustrates a parallel implementation. Similarly ~\ref{algo:buff_para} illustrates a parallel implementation of the algorithm described in Algorithm~\ref{algo:buffer_sym}. The probabilities used in Algorithm~\ref{algo:buff_para} are not included here as they are derived from communication and computation speed differences. Finally we illustrate the parallelism compared to backprop in \ref{fig:update_lock_greedy}

    \begin{figure*}[t]
        \centering
        \includegraphics[width=0.6\linewidth]{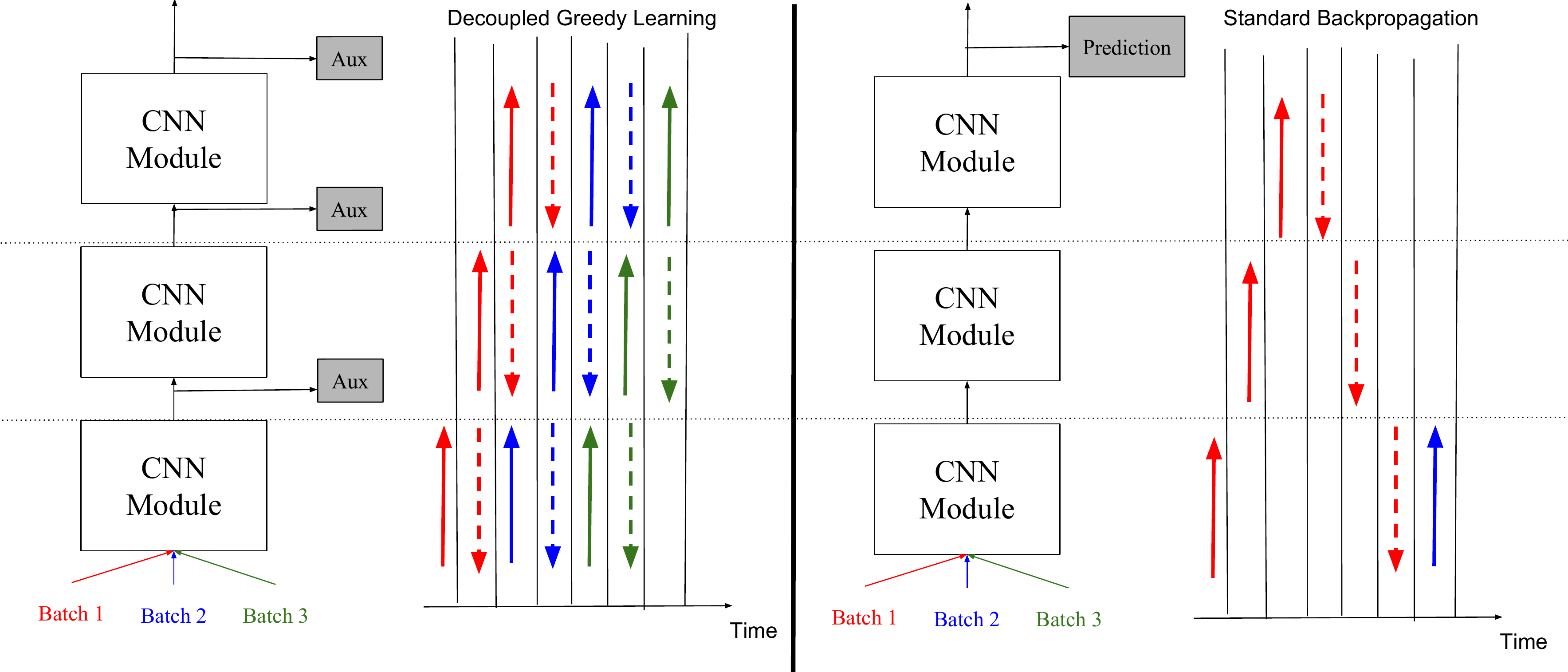}
        \caption{We illustrate the signal propagation for three mini-batches processed by standard back-propagation and with decoupled greedy learning. In each case a module can begin processing forward and backward passes as soon as possible. For illustration we assume same speed for forward and backward passes, and discount the auxiliary network computation (negligible here).} 
    \label{fig:update_lock_greedy}
    \end{figure*}
    
\begin{figure*}[h]
\begin{minipage}{\textwidth}
\begin{minipage}[b]{0.51\textwidth}
\begin{algorithm2e}[H]\small
\caption{DGL Parallel Implementation\label{_algo:basic_parallel}}
  \SetAlgoLined
  \DontPrintSemicolon
 \KwIn{Stream $\mathcal{S}\triangleq\{(x_0^t,y^t)\}_{t\leq T}$ of samples or mini-batches;}
\textbf{Initialize} Parameters $\{\theta_j,\gamma_j\}_{j\leq J}$\\
Worker 0:\;
\For {$x_0^t \in \mathcal{S}$}{
$x^t_1 \leftarrow f_{\theta^t_{0}}(x^t_{0})$\\
Send $x_0^t$ to worker $1$\;
Compute $\nabla_{(\gamma_1,\theta_1)}\hat{ \mathcal{L}}(y^t,x^t_0;\gamma^t_0,\theta^t_0)$\;
$(\theta^{t+1}_0,\gamma^{t+1}_0)\leftarrow$ Step of parameters $(\theta^t_0,\gamma^t_0)$}
Worker $j$:\;
\For{$t \in 0 ... T$}{
Wait until $x^t_{j-1}$ is available\;
$x^t_j \leftarrow f_{\theta^t_{j-1}}(x^t_{j-1})$ \\
Compute $\nabla_{(\gamma_j,\theta_j)}\hat{ \mathcal{L}}(y^t,x^t_j;\gamma^t_j,\theta^t_j)$\\
Send $x^t_{j}$ to worker $x^t_{j+1}$\\
$(\theta^{t+1}_j,\gamma^{t+1}_j)\leftarrow$ Step of parameters $(\theta^t_j,\gamma^t_j)$}
\end{algorithm2e}
\end{minipage}
\hfill
\begin{minipage}[b]{0.46\textwidth}
\begin{algorithm2e}[H]
\small\caption{DGL Async Buffer Parallel Impl.}\label{algo:buff_para}
\SetAlgoLined
  \DontPrintSemicolon
    \KwIn{Stream $\mathcal{S}\triangleq\{(x_0^t,y^t)\}_{t\leq T}$;  Distribution of the delay $p=\{p_j\}_{j}$; Buffer size $M$ }
 \textbf{Initialize:} Buffers $\{B_j\}_{j}$ with size $M$; params $\{\theta_j,\gamma_j\}_{j}$\\
 Worker $j$: \\
\While{\normalfont{\textbf{ training}}}{
    \lIf{ $ j=1$}{ $ (x_{0},y)\gets \mathcal{S}$} \lElse{ $(x_{j-1},y)\gets B_{j-1}$}\;
    $x_j \leftarrow f_{\theta_{j-1}}(x_{j-1})$\;
    Compute $\nabla_{(\gamma_j,\theta_j)}\hat{ \mathcal{L}}(y,x_j;\gamma_j,\theta_j)$\;
     $(\theta_j,\gamma_j)\leftarrow$ Step of parameters $(\theta_j,\gamma_j)$\;
    \lIf{$j<J$}{
    $B_{j} \gets (x^{j},y)$
           }}
\end{algorithm2e}
\end{minipage}
\end{minipage}
\end{figure*}


\end{document}